\DeclareMathOperator*{\E}{\mathbb{E}}
\DeclareMathOperator{\kl}{KL}
\DeclareMathOperator{\bernoullikl}{B}
\DeclareMathOperator*{\argmin}{arg\,min}
\DeclareMathOperator{\Tr}{Tr}			%
\DeclareBoldMathCommand{\u}{u}      	%
\DeclareBoldMathCommand{\w}{w}      	%
\DeclareBoldMathCommand{\x}{x}      	%
\DeclareBoldMathCommand{\g}{g}      	%
\DeclareBoldMathCommand{\z}{z}      	%
\DeclareBoldMathCommand{\np}{\theta}	%
\DeclareBoldMathCommand{\Mu}{\mu}		%
\DeclareBoldMathCommand{\e}{e}			%
\DeclareBoldMathCommand{\I}{I}			%
\DeclareBoldMathCommand{\M}{M}			%
\renewcommand{\top}{\intercal}
\newcommand{\reals}{\mathbb{R}}
\newcommand{\half}{\frac{1}{2}}
\newcommand{\thalf}{\tfrac{1}{2}}       %
\newcommand{\domainP}{\mathcal{P}}
\newcommand{\domainw}{\mathcal{W}}
\newcommand{\normal}{\mathcal{N}}       %
\newcommand{\simplex}{\triangle^{{\kern-0.1em}d}}
\newcommand{\Expfam}{\mathcal{E}}		%
\newcommand{\inner}[2]{\langle #1, #2 \rangle}  %
\newcommand{\sloss}{\ell}                       %
\newcommand{\altsloss}{\tilde{\ell}}            %
\newcommand{\sregret}{S}                        %
\newcommand{\altsregret}{\tilde{S}}             %
\newcommand{\altbeta}{\tilde{\beta}}            %
\newcommand{\altw}{\tilde{w}}                   %
\newcommand{\altp}{\tilde{p}}                   %
\newcommand{\etaopt}{\hat{\eta}}                %
\newcommand{\der}{\mathrm{d}}                   %
\newcommand{\postpi}{\hat{\pi}}                 %
\newcommand*\dif{\mathop{}\!\textnormal{d}}
\newcommand{\regret}{\mathcal{R}}
\newcommand{\V}{\mathcal{V}}                    %
\DeclareRobustCommand{\VAN}[3]{#2} %
\title[The Many Faces of Exponential Weights]{The Many Faces of
Exponential Weights in Online Learning}
\author{\Name{Dirk {van der Hoeven}} \Email{dirkvderhoeven@gmail.com}\\
\and
\Name{Tim {van Erven}} \Email{tim@timvanerven.nl}\\
\addr Statistics Department, Leiden University, the Netherlands
\AND
\Name{Wojciech Kot{\l}owski} \Email{wkotlowski@cs.put.poznan.pl}\\
\addr Institute of Computing Science, Pozna{\'n} University of Technology, Poland
}
\begin{document}
\maketitle

\begin{abstract}
A standard introduction to online learning might place Online Gradient
Descent at its center and then proceed to develop generalizations and
extensions like Online Mirror Descent and second-order methods. Here we
explore the alternative approach of putting Exponential Weights (EW)
first. We show that many standard methods and their regret bounds then
follow as a special case by plugging in suitable surrogate losses and
playing the EW posterior mean. For instance, we easily recover Online
Gradient Descent by using EW with a Gaussian prior on linearized losses,
and, more generally, all instances of Online Mirror Descent based on
regular Bregman divergences also correspond to EW with a prior that
depends on the mirror map. Furthermore, appropriate quadratic surrogate
losses naturally give rise to Online Gradient Descent for strongly
convex losses and to Online Newton Step. We further interpret several
recent adaptive methods (iProd, Squint, and a variation of Coin Betting
for experts) as a series of closely related reductions to exp-concave
surrogate losses that are then handled by Exponential Weights. Finally,
a benefit of our EW interpretation is that it opens up the possibility
of sampling from the EW posterior distribution instead of playing the
mean. As already observed by \citeauthor{bubeck2014}, this recovers the
best-known rate in Online Bandit Linear Optimization.
\end{abstract}

\section{Introduction}

\emph{Exponential Weights} (EW) \citep{Vovk1990,LittlestoneWarmuth1994} is a method for keeping track of uncertainty about the
best action in sequential prediction tasks. It is most commonly
considered for a finite number of actions in the prediction with expert
advice setting, where each of the actions corresponds to following the
advice of one of a finite number of experts, and in this context it is
asymptotically minimax optimal
\citep[Section~2.2]{CesaBianchiLugosi2006}. However, in the present work
we mostly consider EW on continuous action spaces in the more general
setting of Online Convex Optimization \citep{hazan2016introduction},
where we show that surprisingly many standard methods turn out to be
special cases of EW.

EW keeps track of a probability distribution over actions
that is updated in each round of the prediction task by multiplying the
probability of each action by a factor that is exponentially decreasing
in the action's error or \emph{loss} in that round, and renormalizing.
This type of update is quite flexible: by assigning appropriate
surrogate losses to the actions, it covers any kind of multiplicative
probability updates, including, for instance, those of the Prod
algorithm \citep{CesaBianchiMansourStoltz2007}. For best performance,
losses often need to be scaled by a positive parameter called the
learning rate, and the algorithm may also be biased towards particular
actions by the choice of its initial distribution, which is called the
prior. For continuous sets of actions, efficient implementations of EW
are often restricted to conjugate priors for which the EW distribution
can be analytically computed, but sampling approximations based on
random walks can also provide appealing trade-offs between computational
complexity and prediction accuracy, even for a single random walk step
per round
\citep{NarayananRakhlin2017,kalai2002efficient}.

The usual presentation of Online Convex Optimization would introduce EW
as a special case of Mirror Descent (MD) or Follow-the-Regularized-Leader
(FTRL) with the Kullback-Leibler divergence as the regularizer. However,
here we turn this view on its head
and show that all instances of MD based on regular Bregman
divergences \citep{banerjee2005} in fact
correspond to EW on a continuous set of actions (Section~\ref{sec:MD}). In particular, Gradient
Descent (GD) comes from using a Gaussian prior on linearized losses
(Section~\ref{sec:GD}), which is striking because GD has been contrasted
with the Exponentiated Gradient Plus-Minus algorithm
\citep{KivinenWarmuth} that is readily seen to be an instance of EW
(Section~\ref{sec:EG}). In addition, the unnormalized relative entropy
regularizer \citep{HelmboldWarmuth2009}, which is normally considered a
generalization of EW, turns out to be a special case of EW as well for
a multivariate Poisson prior (Section~\ref{sec:MD}). Furthermore, in
Section~\ref{sec:quadlosses} we show that running
EW on suitable quadratic approximations of the losses recovers Gradient Descent for strongly convex losses
\citep{HazanAgarwalKale2007} and, as already observed by
\citet{VanErvenKoolen2016}, Online Newton Step
\citep{HazanAgarwalKale2007}. The Vovk-Azoury-Warmuth forecaster would
also be an example of running EW on quadratic losses, but we refer to
\citep{Vovk2001} for its analysis, which requires a generalized proof
technique (see also the discussion by
\citet{OrabonaCrammerCesaBianchi}). We do consider the recent adaptive
iProd, Squint and Coin Betting methods of
\citet{KoolenVanErven2015,OrabonaPal2016}, which learn the optimal
learning rate for prediction with expert advice, and show that these may
also be viewed as running EW after a reduction of the original
prediction task to various closely related surrogate tasks in which the
learning rate is just one of the parameters that does not need to be
treated specially (Section~\ref{sec:adaptivity}). Finally, in the
context of Bandit Linear Optimization, the SCRiBLe method
\citep{abernethy2008} may be viewed as an approximation to EW, and an
application of EW outlined by \citet{bubeck2014} achieves the best-known
rate (we provide the technical details they~omit~in~Section~\ref{sec:bandits}).

\paragraph{Related Work}

The diverse applications of EW on a finite number of actions range, for
instance, from boosting \citep{FreundSchapire1997} to differential
privacy \citep{DworkBook2014} to multi-armed bandits \citep{aueretal2002}, and many algorithms in
computer science can be viewed as special cases of EW \citep{Arora}.
EW has also been considered for continuous sets of actions, often in the context of universal coding in information theory, where
the goal is to sequentially compress a sequence of symbols. In this
case, actions parametrize a set of probability distributions and the
loss of an action is the logarithmic loss for the corresponding
probability distribution on the symbol that is being compressed
\citep[Chapter~9]{CesaBianchiLugosi2006}. EW (with learning rate~$1$) then simplifies to Bayesian probability updating. The choice of
prior has received much attention in this literature, with Jeffreys'
prior being shown to be asymptotically minimax optimal for exponential
families with parameters restricted to suitable bounded sets 
\citep[Chapter~8]{Grunwald2007}. Without parameter restrictions,
Jeffreys' prior is still minimax optimal up to constants for the
Bernoulli and multinomial models
\citep{krichevsky1981performance,XieBarron}. Several applications to
other losses are also closely related to the log loss: Online
Ridge Regression corresponds to EW on the squared loss, which matches
the log loss for Gaussian distributions; and Cover's method for
portfolio selection \citep{Cover1991}, which is EW on
Cover's loss, may be interpreted as
learning a mixture model under the log loss
\citep{orseau2017soft}. In general, continuous EW is not restricted to
the log loss, however, and has been considered e.g.\ for general convex
losses \citep{DickGyorgyCzepesvari2014} or as a computationally
inefficient gold standard for exp-concave losses
\citep{HazanAgarwalKale2007}.

\section{Exponential Weights}
\label{sec:EW}

In Online Convex Optimization (OCO)
\citep{ShaiBook,hazan2016introduction} a learner repeatedly chooses
actions $\w_t$ from a convex set $\domainw \subseteq \reals^d$ during
rounds $t=1,\ldots,T$, and suffers losses~$f_t(\w_t)$, where $f_t:
\domainw \to \reals$ is a convex function. The learner's 
goal is to achieve small \emph{regret}
  $\regret_T(\u) = \sum_{t=1}^T f_t(\w_t) - \sum_{t=1}^T f_t(\u)$
with respect to any comparator action $\u \in \domainw$,
which measures the difference between the cumulative loss of the learner and
the cumulative loss it could have achieved by playing the oracle
action~$\u$ from the start.
We will assume the domain of the losses $f_t$ is extended from $\domainw$ to
$\reals^d$ with convexity of $f_t$ being preserved. This comes without
loss of generality as one can always set $f_t(\w)=\infty$ outside
$\domainw$, but we will use more natural and straightforward extensions
throughout the paper (e.g.\ when the $f_t$ are linear or quadratic
functions).

The central topic of this work is the Exponential Weights (EW)
algorithm, which keeps track of uncertainty over actions expressed by a
distribution $P_t$ and comes in the two flavors shown in
Figure~\ref{alg:EW} (our naming follows \citet{zinkevich2003}), where
we let $\kl(P\|Q) = \E_{P}\big[ \ln \frac{\der P}{\der Q}
\big]$ denote the Kullback-Leibler (KL) divergence between distributions
$P$ and $Q$. 
\begin{figure}
{%
\centering \small
\begin{tabular}{| c | c | r |}
\hline
\multicolumn{2}{|l|}{\textbf{Input:} a convex set of distributions
$\domainP$ over $\w$, a prior $P_1 \in \domainP$ and learning rates $\eta_1 \geq \eta_2 \geq \cdots \geq \eta_T > 0$} \\
\hline
Lazy Exponential Weights & Greedy Exponential Weights \\
\hline
\parbox{7.4cm}{\vspace{-0.35cm} \begin{equation*}
\begin{split}
    \tilde{P}_{t+1} & = \argmin_P ~ \E_P\big[\textstyle\sum_{s=1}^t f_s(\w)\big] + \frac{1}{\eta_{t}}\kl(P\|P_1) \\
    P_{t+1} & = \argmin_{P \in \domainP} ~ \kl(P\|\tilde{P}_{t+1})
\end{split}
\end{equation*} \vspace{-0.35cm} } & \parbox{6cm}{\vspace{-0.35cm} \begin{equation*}
\begin{split}
    \tilde{P}_{t+1} &= \argmin_P ~ \E_P[f_t(\w)] +
    \tfrac{1}{\eta_{t}}\kl(P\|P_t)\\
    P_{t+1} & = \argmin_{P \in \domainP} ~ \kl(P\|\tilde{P}_{t+1}) %
\end{split}
\end{equation*} \vspace{-0.35cm}} \\
\hline
\end{tabular}
}
\caption{The
lazy and greedy versions of Exponential Weights}
\label{alg:EW}
\end{figure}
The algorithm gets its name from the distributions $\tilde{P}_t$, whose
densities have the following exponential forms:
\begin{align}
\dif \tilde{P}_{t+1}(\w) &= \frac{e^{-\eta_t \sum_{s=1}^t f_s(\w)}\dif P_1(\w)}{\int e^{-\eta_t \sum_{s=1}^t f_s(\w)}\dif P_1(\w)}
&& \text{(lazy EW)}
\label{eq:lazy_EW}\\
\dif \tilde{P}_{t+1}(\w) &=
\frac{e^{-\eta_t f_t(\w)} \dif P_t(\w)}{\int e^{-\eta_t f_t(\w)} \dif
P_t(\w)} && \text{(greedy EW).}
\label{eq:greedy_EW}
\end{align}
In the case that $\domainP$ contains all possible distributions over
$\reals^d$ (for which the projection step becomes void) and the
\emph{learning rates} $\eta_t$ are constant $\eta_1 = \cdots = \eta_T =
\eta$, both versions of EW are equivalent. In general they differ, and
enjoy the following regret bounds with respect to a potentially
randomized comparator drawn from a comparator distribution $Q$, which
follow from a standard MD analysis \citep{hazan2016introduction}
and a reformulation of the standard FTRL analysis that works for
distributions $P_t$ on continuous spaces, which cannot be expressed as
the finite-dimensional vectors that are usually assumed (the proof
details are in Appendix~\ref{sec:EWRegretProof}):
\begin{lemma}[EW Regret]\label{lem:EWRegret}
  Suppose that $\eta_1 \geq  \eta_{2} \geq \ldots
  \geq \eta_T > 0$, and that the minima that define
  $\tilde{P}_t$ and $P_t$ are uniquely achieved. Let $Q \in \domainP$ be
  any comparator distribution such that $\kl(Q\|\tilde{P}_t) < \infty$
  for all $t$, let $\{\w_t \in \domainw\}_{t=1}^T$ be the actions of
  any learner, and define $\eta_0 \stackrel{\mathrm{def}}{=} \eta_1$. Then  EW satisfies
\begin{align}
    \E_{\u \sim Q}[\regret(\u)]
      &\leq \frac{1}{\eta_T} \kl(Q \| P_1)
      + \sum_{t=1}^T \Big\{ \underbrace{f_t(\w_t) + \frac{1}{\eta_{t-1}}
          \ln \E_{P_t(\w)}\Big[e^{-\eta_{t-1}
        f_t(\w)}\Big]}_{\textnormal{``mixability gap''}}\Big\}
      && \textnormal{(lazy EW)}
    \label{eqn:EWRegret}\\
    \E_{\u \sim Q}[\regret(\u)]
      &\leq \frac{1}{\eta_1} \kl(Q \| P_1)
        + \left(\frac{1}{\eta_T}-\frac{1}{\eta_1}\right)\max_{t=2,\ldots,T} \kl(Q\|P_t)
        \nonumber \\
       &\quad+ \sum_{t=1}^T \Big\{ \underbrace{f_t(\w_t) + \frac{1}{\eta_t}
        \ln \E_{P_t(\w)}\Big[e^{-\eta_t
        f_t(\w)}\Big]}_{\textnormal{``mixability gap''}}\Big\}
    && \hspace{-1em}\textnormal{(greedy EW).}\label{eqn:EWRegret_greedy}
  \end{align}
\end{lemma}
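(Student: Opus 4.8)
The plan is to prove the greedy bound~\eqref{eqn:EWRegret_greedy} by the usual Mirror Descent (MD) argument, carried out in the (infinite-dimensional) space of distributions with $\kl(\cdot\,\|\,\cdot)$ as the Bregman divergence, and the lazy bound~\eqref{eqn:EWRegret} by the usual Follow-the-Regularized-Leader (FTRL) argument, reformulated so that the finite-dimensional FTRL potential is replaced by the \emph{free energy}
\[
  \Phi_t(\eta) := -\min_{P\in\domainP}\Bigl\{\E_{P(\w)}\bigl[{\textstyle\sum_{s=1}^{t}}f_s(\w)\bigr] + \tfrac1\eta\kl(P\|P_1)\Bigr\},
  \qquad \Phi_0(\eta) := 0.
\]
Both arguments rest on two elementary facts. \textbf{(i)} For any reference distribution $R$ and measurable $g$, $-\min_{P}\bigl\{\E_P[g(\w)]+\tfrac1\eta\kl(P\|R)\bigr\} = \tfrac1\eta\ln\E_R[e^{-\eta g(\w)}]$, the minimizer having density proportional to $e^{-\eta g}$ times that of $R$; this is what makes the EW updates~\eqref{eq:lazy_EW}--\eqref{eq:greedy_EW} exponential, and it exhibits the normalizers $\ln\E_{P_t}[e^{-\eta f_t}]$ appearing in the mixability gaps as instances of this formula. \textbf{(ii)} Since $P_{t+1}$ is the $\kl$-projection of $\tilde{P}_{t+1}$ onto the convex set $\domainP$, the generalized Pythagorean inequality gives $\kl(Q\|\tilde{P}_{t+1}) \ge \kl(Q\|P_{t+1}) + \kl(P_{t+1}\|\tilde{P}_{t+1}) \ge \kl(Q\|P_{t+1})$ for every $Q\in\domainP$, which is how the projection step is absorbed. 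The hypotheses (unique minimizers, $\kl(Q\|\tilde{P}_t)<\infty$) are what keep every quantity below finite and well defined, so that the subtractions are legitimate.

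For the greedy bound I would start from the identity $\kl(Q\|P_t)-\kl(Q\|\tilde{P}_{t+1}) = -\eta_t\E_Q[f_t]-\ln\E_{P_t}[e^{-\eta_t f_t}]$, which drops out upon expanding the two KL divergences using the explicit exponential density of $\tilde{P}_{t+1}$. Solving it for $-\E_Q[f_t]$, adding $f_t(\w_t)$ (equivalently: first obtaining the MD regret $\E_{P_t}[f_t]-\E_Q[f_t]\le\cdots$ in the distribution space and afterwards adding the ``play-the-mean'' gap $f_t(\w_t)-\E_{P_t}[f_t]$), and using fact~(ii) to pass from $\tilde{P}_{t+1}$ to $P_{t+1}$ gives the one-step inequality
\[
  f_t(\w_t) - \E_Q[f_t] \;\le\; \tfrac1{\eta_t}\bigl(\kl(Q\|P_t)-\kl(Q\|P_{t+1})\bigr) + \Bigl(f_t(\w_t) + \tfrac1{\eta_t}\ln\E_{P_t}\bigl[e^{-\eta_t f_t}\bigr]\Bigr),
\]
whose bracketed term is precisely the claimed greedy mixability gap. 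Summing over $t$ and applying Abel summation to $\sum_t\tfrac1{\eta_t}\bigl(\kl(Q\|P_t)-\kl(Q\|P_{t+1})\bigr)$ --- using that $1/\eta_t$ is nondecreasing and $\kl\ge0$, and dropping the nonpositive boundary term $-\tfrac1{\eta_T}\kl(Q\|P_{T+1})$ --- produces exactly $\tfrac1{\eta_1}\kl(Q\|P_1)+\bigl(\tfrac1{\eta_T}-\tfrac1{\eta_1}\bigr)\max_{t=2,\dots,T}\kl(Q\|P_t)$, which is~\eqref{eqn:EWRegret_greedy}.

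For the lazy bound I would use that the round-$t$ mixability gap equals $f_t(\w_t)$ minus the ``mix loss'' $m_t := -\tfrac1{\eta_{t-1}}\ln\E_{P_t}[e^{-\eta_{t-1}f_t}]$, so that $\E_{\u\sim Q}[\regret(\u)]$ equals the sum of the mixability gaps plus $\sum_{t=1}^T m_t-\sum_{t=1}^T\E_Q[f_t]$, leaving only this last difference to bound. The key is the per-round estimate $m_t \le \Phi_{t-1}(\eta_{t-1})-\Phi_t(\eta_{t-1})$: the objective defining $\Phi_{t-1}(\eta_{t-1})$, namely $\E_P[\sum_{s=1}^{t-1}f_s]+\tfrac1{\eta_{t-1}}\kl(P\|P_1)$, equals $\tfrac1{\eta_{t-1}}\kl(P\|\tilde{P}_t)$ up to a constant independent of $P$, so fact~(ii) applied to the projection $P_t$ of $\tilde{P}_t$ bounds it below by $-\Phi_{t-1}(\eta_{t-1})+\tfrac1{\eta_{t-1}}\kl(P\|P_t)$ on $\domainP$; substituting this into the definition of $\Phi_t(\eta_{t-1})$ and then using fact~(i) with reference $P_t$ (relaxing the minimum to all distributions, which only lowers it) gives the estimate. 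Summing over $t$ and telescoping then only requires comparing $\Phi_t$ at the adjacent rates $\eta_{t-1}$ and $\eta_t$, which is handled by the monotonicity $\eta\le\eta'\Rightarrow\Phi_t(\eta)\le\Phi_t(\eta')$ (a larger $\eta$ shrinks the nonnegative term $\tfrac1\eta\kl(\cdot\|P_1)$), together with $\eta_1\ge\cdots\ge\eta_T$ and $\Phi_0\equiv0$; the telescoped sum comes out at most $-\Phi_T(\eta_T) = \min_{P\in\domainP}\bigl\{\E_P[\sum_{s=1}^{T}f_s]+\tfrac1{\eta_T}\kl(P\|P_1)\bigr\} \le \E_Q[\sum_{t=1}^T f_t]+\tfrac1{\eta_T}\kl(Q\|P_1)$, which is~\eqref{eqn:EWRegret}.

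The step I expect to be the main obstacle is, in the lazy analysis, handling the projection and the time-varying learning rate together inside the telescoping: each $P_t$ is a $\kl$-projection of a leader formed with rate $\eta_{t-1}$, whereas the terminal comparison uses $\eta_T$, so the telescoped sum must be threaded through all the intermediate rates --- which is exactly what forces the two-step ``Pythagoras, then monotonicity of $\Phi_t$ in $\eta$'' structure above. The pervasive secondary difficulty, and the reason this is a \emph{reformulation} rather than a quotation of the textbook argument, is that the iterates $P_t$ live on a continuous space and are not the finite-dimensional vectors that MD/FTRL analyses usually assume, so every step that would ordinarily invoke differentiability of a mirror map or a finite-dimensional Bregman identity must instead be carried out through the measure-theoretic identities for $\kl$ and $\ln\E[e^{-\eta f}]$ recorded in~(i)--(ii), with the density and finiteness hypotheses kept in view so that no $\infty-\infty$ arises.
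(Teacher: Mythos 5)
Your proposal is correct and follows essentially the same route as the paper's proof in Appendix~\ref{sec:EWRegretProof}: the greedy bound via the per-round identity $\kl(Q\|P_t)-\kl(Q\|\tilde{P}_{t+1})=-\eta_t\E_Q[f_t(\w)]-\ln\E_{P_t}[e^{-\eta_t f_t(\w)}]$ (your sign here is the right one), the generalized Pythagorean inequality to absorb the projection, and Abel summation; and the lazy bound via the same telescoping argument, since your free energy satisfies $-\Phi_t(\eta_t)=\E_{P_{t+1}}\big[\sum_{s\le t}f_s(\w)\big]+\tfrac{1}{\eta_t}\kl(P_{t+1}\|P_1)$ up to the choice of minimizer, so your per-round estimate and monotonicity of $\Phi_t$ in $\eta$ reproduce exactly the paper's use of the Pythagorean inequality with $Q=P_{t+1}$ and of $\eta_t\le\eta_{t-1}$. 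The only difference is presentational (keeping the minimum abstract rather than evaluating it at $P_{t+1}$), so no further comparison is needed.
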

While the predictions $\w_t$ in Lemma~\ref{lem:EWRegret} are arbitrary
actions from $\domainw$, one always chooses $\w_t$ to be some function
of $P_t$. A general mapping from $P_t$ to $\w_t$ is called a
\emph{substitution function} \citep{Vovk2001} and is usually designed to
give the best bound on the mixability gap in trial $t$. 
Throughout the paper, we will use the mean $\w_t = \E_{P_t}[\w]$ as our
substitution function, which is a typical choice, although alternatives
may be better
in specific cases \citep{Vovk2001}. To
ensure that $\w_t \in \domainw$, we will also generally assume that
$\domainP = \{P : \E_P[\w] \in \domainw\}$, which is convex.

Bounding the mixability gap is a crucial part of the regret analysis of
EW \citep{Vovk2001,DeRooijVanErvenGrunwaldKoolen2014}. In the special
case that the losses are \emph{$\alpha$-exp-concave} for $\alpha > 0$
(i.e.\ if $e^{-\alpha f(\w)}$ is concave), the mixability gap for
$\eta_t \leq \alpha$ is at most $0$. This happens in the following
example.

\begin{example}[The Krichevsky-Trofimov Estimator]\label{ex:KT}
  Let $\domainw = [0,1]$ and let the loss function be the log loss: $f_t(w) = 
  -x_t\ln(w) - (1-x_t)\ln(1-w)$, where $x_t \in \{0, 1\}$. A standard
  algorithm in this case is the 
  Krichevsky-Trofimov forecaster $w_t = (\sum_{s=1}^{t-1}x_s +
  \half)/t$
  \citep[Chapter~9]{CesaBianchiLugosi2006}, which is 
  is well known to be the mean $w_t = \E_{P_t}[w]$ of
  non-projected EW with a $\beta(\frac{1}{2}, \frac{1}{2})$ prior 
  and a fixed learning rate $\eta_t = 1$. 
  For the log loss, the mixability gap is $0$. 
To bound the remaining terms in Lemma~\ref{lem:EWRegret}, we choose $Q =
P_{T+1}$, which gives:
\begin{align*}
\sum_{t=1}^T &f_t(w_t)
\leq \E_{P_{T+1}(w)}\left[\sum_{t=1}^T f_t(w)\right] + \kl(P_{T+1}\|P_1) 
= -\ln \E_{P_1(w)}[w^{\sum_{t=1}^T x_t}(1-w)^{T - \sum_{t=1}^T x_t}]\\
&\leq -\ln \max_w \left\{w^{\sum_{t=1}^T x_t}(1-w)^{T - \sum_{t=1}^T
x_t}\right\}
+ \ln(2\sqrt{T})
= \min_w \sum_{t=1}^T f_t(w) + \ln(2\sqrt{T}),
\end{align*}
where the last inequality holds by
\citep[Lemma~9.3]{CesaBianchiLugosi2006}.
\end{example}
For most regret bounds derived from Lemma~\ref{lem:EWRegret} the
structure of the proof remains the same: we need both a bound on the
mixability gap, and a choice for $Q$ for which the expected loss under
$Q$ together with $\kl(Q\|P_1)$ can be related to the loss of a
deterministic comparator.

\section{Linearized Losses}
\label{sec:linearloss}

A standard approach in OCO is to lower-bound the convex losses $f_t$ by
their tangent at $\w_t$, which leads to the following upper bound on the
regret in terms of the linearized surrogate losses $\sloss_t(\w) =
\inner{\w}{\g_t}$, where $\g_t = \nabla f_t(\w_t) =
(g_{t,1},\ldots,g_{t,d})^\top$ is the gradient at~$\w_t$:
\begin{equation}\label{eqn:linearizedloss}
  \sum_{t=1}^T \left(f_t(\w_t) - f_t(\u)\right) \leq \sum_{t=1}^T
  \big(\sloss_t(\w_t) - \sloss_t(\u)\big).
\end{equation}

\subsection{Exponentiated Gradient Plus-Minus as Exponential Weights}\label{sec:EG}

The Exponentiated Gradient Plus-Minus ($\text{EG}^\pm$) algorithm
\citep{KivinenWarmuth} starts with weight vectors $\w_t^- = \w_t^+ =
(1/d,\ldots,1/d) \in \reals^d$, which are updated according to
\begin{align*}
  w_{t+1,i}^+ &= \frac{w_{t,i}^+ e^{-\eta_t \langle \e_i, \g_{t} \rangle}}
    {\sum_{j=1}^d (w_{t,j}^+ e^{-\eta_t \langle \e_j, \g_{t} \rangle} + w_{t,j}^- e^{\eta_t
    \langle \e_j, \g_{t} \rangle})},
  &
  w_{t+1,i}^- &= \frac{w_{t,i}^- e^{\eta_t \langle \e_i, \g_{t} \rangle}}
    {\sum_{j=1}^d (w_{t,j}^+ e^{-\eta_t \langle \e_j \g_{t} \rangle} + w_{t,j}^- e^{\eta_t
    \langle \e_j, \g_{t} \rangle})},
\end{align*}
and predicts by $\w_t \in \{\w: \|\w\|_1 \leq 1\}$ with components
  $w_{t,i} = w_{t,i}^+ - w_{t,i}^-$.

This is readily seen to be the mean $\w_t = \E_{P_t}[\w]$ %
of EW
(without projections) on the linearized losses
\eqref{eqn:linearizedloss} with a discrete uniform
prior $P_1$ on the standard basis vectors $\e_1,\ldots,\e_d$, which form
the corners of the probability simplex, and their negations
$-\e_1,\ldots,-\e_d$. The regular Exponentiated Gradient algorithm is
recovered by initializing $\w_1^- = (0,\ldots,0)$, which corresponds to
placing prior mass only on $\e_1,\ldots,\e_d$. \Citet{KivinenWarmuth}
also extend the algorithm to scale up the domain by a factor $M > 0$,
which corresponds to a discrete prior on $M\e_1,\ldots,M\e_d$ for EG and
also on $-M\e_1,\ldots,-M\e_d$ for $\text{EG}^\pm$. Hence we may analyze
these methods using Lemma~\ref{lem:EWRegret}, which leads to the
following regret bound for $\text{EG}^\pm$ (see
Appendix~\ref{sec:EGpmproof}):
\begin{theorem}[$\text{EG}^\pm$ as EW]\label{th:EGpm} Suppose $\|\g_t\|_\infty \leq G$ for
all $t$. Then the regret of $\text{EG}^\pm$ for scale factor $M > 0$ and
constant learning rate $\eta_t = \sqrt{\frac{2 \ln(2d)}{TM^2G^2}}$
satisfies
\[
  \regret_T(\u)
    \leq GM\sqrt{2 T\ln(2d)}
    \qquad \text{for all $\u$ such that $\|\u\|_1 \leq M$.}
\]
\end{theorem}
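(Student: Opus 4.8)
The plan is to instantiate the lazy-EW bound \eqref{eqn:EWRegret} for the specific EW instance that $\text{EG}^\pm$ with scale factor $M$ corresponds to: non-projected EW with constant learning rate $\eta_t=\eta$ on the linearized losses $\sloss_t(\w)=\inner{\w}{\g_t}$ of \eqref{eqn:linearizedloss}, with prior $P_1$ the uniform distribution on the $2d$ atoms $\{M\e_i\}_{i=1}^d\cup\{-M\e_i\}_{i=1}^d$. Because there is no projection, $P_t=\tilde P_t$ is discrete on these atoms, the minimizing distributions are unique, and the mean $\w_t=\E_{P_t}[\w]$ automatically lies in the convex hull $\domainw=\{\w:\|\w\|_1\le M\}$, so Lemma~\ref{lem:EWRegret} applies. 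Combining the lemma (applied with $f_t=\sloss_t$) with the linearization inequality \eqref{eqn:linearizedloss}, it will suffice to show that for a well-chosen comparator distribution $Q$ on the atoms one has $\frac1\eta\kl(Q\|P_1)\le\frac1\eta\ln(2d)$, that $\sum_t\E_{\u'\sim Q}[\sloss_t(\u')]=\sum_t\sloss_t(\u)$ for the deterministic comparator $\u$, and that the mixability-gap sum is at most $\tfrac12 T\eta M^2G^2$.

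Next I would bound the mixability gap term by term. Under $P_t$ the action $\w$ is supported on the atoms $\pm M\e_i$, on which $\sloss_t(\w)=\pm M g_{t,i}$ takes values in $[-MG,MG]$ since $\|\g_t\|_\infty\le G$; moreover $\E_{P_t}[\sloss_t(\w)]=\sloss_t(\w_t)$ because $\sloss_t$ is linear and $\w_t=\E_{P_t}[\w]$. The standard Hoeffding/sub-Gaussian bound on the log-moment-generating function of a random variable ranging over an interval of length $2MG$ then gives $\ln\E_{P_t}[e^{-\eta\sloss_t(\w)}]\le-\eta\,\sloss_t(\w_t)+\eta^2(2MG)^2/8$, so each term $\sloss_t(\w_t)+\frac1\eta\ln\E_{P_t}[e^{-\eta\sloss_t(\w)}]$ is at most $\eta M^2G^2/2$, and the sum over $t$ is at most $T\eta M^2G^2/2$.

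For the $\kl(Q\|P_1)$ term I would exhibit an explicit $Q$ supported on the $2d$ atoms with mean exactly $\u$: writing $\u=\sum_i u_i\e_i$ with $\sum_i|u_i|\le M$, put mass $q_i^{\pm}=\tfrac12\big(s_i\pm u_i/M\big)$ on $\pm M\e_i$, where $s_i=|u_i|/M+\tfrac1d\big(1-\sum_j|u_j|/M\big)\ge|u_i|/M$; this is a valid probability distribution with $\E_Q[\w]=\u$, and since $\sloss_t$ is linear, $\E_{\u'\sim Q}[\sloss_t(\u')]=\sloss_t(\u)$, so that $\E_{\u'\sim Q}[\regret(\u')]=\sum_t(\sloss_t(\w_t)-\sloss_t(\u))\ge\regret_T(\u)$ by \eqref{eqn:linearizedloss}. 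Since $P_1$ assigns mass $1/(2d)$ to each atom, $\kl(Q\|P_1)=\ln(2d)-H(Q)\le\ln(2d)$ where $H(Q)$ is the Shannon entropy, and $\kl(Q\|\tilde P_t)<\infty$ because each atom has strictly positive $\tilde P_t$-mass. Putting the three pieces together yields $\regret_T(\u)\le\frac{\ln(2d)}{\eta}+\frac{T\eta M^2G^2}{2}$, and substituting $\eta=\sqrt{2\ln(2d)/(TM^2G^2)}$ gives the claimed $GM\sqrt{2T\ln(2d)}$.

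The main obstacle is the $Q$-step: the reduction $\E_{\u'\sim Q}[\regret(\u')]=\sum_t(\sloss_t(\w_t)-\sloss_t(\u))$ relies crucially on linearity of the surrogate losses, and one must produce a concrete $Q$ supported on the prior's atoms with mean \emph{exactly} $\u$ and $\kl(Q\|P_1)$ no larger than $\ln(2d)$; the mixability-gap estimate and the final tuning of $\eta$ are routine.
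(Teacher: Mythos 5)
Your proposal is correct and follows essentially the same route as the paper's proof in Appendix~\ref{sec:EGpmproof}: apply Lemma~\ref{lem:EWRegret} to EW with the uniform prior on the $2d$ atoms $\{\pm M\e_i\}$, bound the KL term by $\ln(2d)$, bound each mixability-gap term by $\eta M^2G^2/2$ via Hoeffding's lemma, and tune $\eta$. The only (cosmetic) differences are that the paper scales the gradients to $\g_t'=M\g_t$ instead of the atoms, and that you make explicit the comparator distribution $Q$ on the atoms with mean exactly $\u$, a step the paper leaves implicit.
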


\subsection{Gradient Descent as Exponential Weights}
\label{sec:GD}

The prior of $\text{EG}^\pm$ is adapted to comparators $\u$ with small
$L_1$-norm.
How do we change the prior to favor comparators with small $L_2$-norm? A
natural and computationally efficient choice is to use a Gaussian prior
$P_1 = \normal(\w_1,\sigma^2 \I)$, where $\I$ is the identity matrix.
Then it turns out that all EW distributions $P_t$ are Gaussian with the
Gradient Descent (GD) predictions as their means:
\begin{theorem}[Gradient Descent as EW]\label{thm:GDasEW}
Let $\domainP = \{P :
\E_P[\w] \in \domainw\}$. Then, for Gaussian prior $P_1(\w) =
\normal(\w_1, \sigma^2 \I)$, lazy and
greedy EW with learning rates $\eta_t$ on the linearized
losses \eqref{eqn:linearizedloss} yield Gaussian
distributions $\tilde{P}_t = \normal(\tilde{\w}_t,\sigma^2 \I)$ and $P_t = \normal(\w_t, \sigma^2 \I)$  
with the same covariance as the prior. The means $\tilde{\w}_t$ and
$\w_t$ coincide with lazy and greedy GD (Figure~\ref{alg:GD}), except
that the learning rates in GD are scaled to $\sigma^2 \eta_t$ by the
prior variance $\sigma^2$.
Moreover, Lemma~\ref{lem:EWRegret} directly implies:
\begin{align*}
  \regret_T(\u) ~&\leq ~\frac{\|\u - \w_1\|_2^2}{2\sigma^2\eta_T} + 
      \frac{\sigma^2}{2}\sum_{t=1}^T \eta_{t-1} \|\g_t\|_2^2 & \text{(lazy GD)}\\
  \regret_T(\u) &\leq ~ \frac{\max_t \|\u - \w_t\|_2^2}{2\sigma^2\eta_T} + 
      \frac{\sigma^2}{2}\sum_{t=1}^T \eta_t \|\g_t\|_2^2 & \text{(greedy GD).}
\end{align*}
\end{theorem}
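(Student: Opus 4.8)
\emph{Overview.} The plan is to verify the three assertions in sequence: (a) the EW iterates $\tilde P_t, P_t$ are Gaussian with covariance $\sigma^2\I$; (b) their means satisfy the lazy/greedy GD recursions with step sizes $\sigma^2\eta_t$; and (c) the two regret bounds follow by instantiating Lemma~\ref{lem:EWRegret}.

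\emph{Gaussianity and the GD recursions.} The key fact is that exponentially tilting a Gaussian density by a linear function returns a Gaussian with the \emph{same} covariance. Concretely, substituting $\sloss_s(\w) = \inner{\w}{\g_s}$ into the exponential forms \eqref{eq:lazy_EW} and \eqref{eq:greedy_EW} and completing the square in the exponent shows $\tilde P_{t+1} = \normal(\tilde\w_{t+1},\sigma^2\I)$ with $\tilde\w_{t+1} = \w_1 - \sigma^2\eta_t\sum_{s=1}^t \g_s$ in the lazy case and $\tilde\w_{t+1} = \w_t - \sigma^2\eta_t\g_t$ in the greedy case. For the projection $P_{t+1} = \argmin_{P\in\domainP}\kl(P\|\tilde P_{t+1})$ onto $\domainP = \{P : \E_P[\w]\in\domainw\}$, I would first establish a Pythagorean identity: for \emph{every} distribution $P$ with mean $\Mu = \E_P[\w]$, $\kl(P\|\normal(\tilde\w_{t+1},\sigma^2\I)) = \kl(P\|\normal(\Mu,\sigma^2\I)) + \tfrac{1}{2\sigma^2}\|\Mu - \tilde\w_{t+1}\|^2$, which follows by expanding the Gaussian log-likelihood ratio and observing that the terms quadratic in $\w$ cancel in expectation. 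Minimising the right-hand side over $P\in\domainP$ first forces $P$ to be Gaussian with covariance $\sigma^2\I$ (so the first term vanishes) and then sets $\Mu$ to the Euclidean projection $\Pi_\domainw(\tilde\w_{t+1})$. Hence $P_{t+1} = \normal(\w_{t+1},\sigma^2\I)$ with $\w_{t+1} = \Pi_\domainw(\tilde\w_{t+1})$, which is exactly lazy/greedy GD (Figure~\ref{alg:GD}) run with step sizes $\sigma^2\eta_t$, and the mean $\w_t = \E_{P_t}[\w]$ agrees with the substitution function used throughout. Strict convexity of $\kl(\cdot\|\tilde P_{t+1})$ together with uniqueness of the Euclidean projection onto the closed convex set $\domainw$ gives the uniqueness of minima required by Lemma~\ref{lem:EWRegret}.

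\emph{Regret bounds.} I would apply Lemma~\ref{lem:EWRegret} with comparator distribution $Q = \normal(\u,\sigma^2\I)$; note $\kl(Q\|\tilde P_t) = \tfrac1{2\sigma^2}\|\u - \tilde\w_t\|^2 < \infty$, as required. By the same Pythagorean identity (with $P = Q$), $\kl(Q\|P_1) = \tfrac1{2\sigma^2}\|\u - \w_1\|^2$ and $\kl(Q\|P_t) = \tfrac1{2\sigma^2}\|\u - \w_t\|^2$. Since the surrogate losses $\sloss_t$ are linear and $\E_Q[\w] = \u$, the expectation of $\sloss_t$ under $Q$ equals $\sloss_t(\u)$, so the expected surrogate regret equals $\sum_{t=1}^T(\sloss_t(\w_t) - \sloss_t(\u))$, which upper-bounds $\regret_T(\u)$ by the linearization inequality \eqref{eqn:linearizedloss}. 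The mixability-gap term is computed with the Gaussian moment generating function: for $\w\sim\normal(\w_t,\sigma^2\I)$ one has $\ln\E[e^{-\eta\inner{\w}{\g_t}}] = -\eta\inner{\w_t}{\g_t} + \tfrac12\eta^2\sigma^2\|\g_t\|^2$, so the gap in round $t$ equals $\tfrac12\eta\sigma^2\|\g_t\|^2$ with $\eta = \eta_{t-1}$ (lazy) or $\eta = \eta_t$ (greedy). Plugging these into \eqref{eqn:EWRegret} yields the lazy GD bound directly, and into \eqref{eqn:EWRegret_greedy} yields the greedy bound after using $\eta_T \le \eta_1$ to bound $\tfrac1{\eta_1}\|\u-\w_1\|^2 + \big(\tfrac1{\eta_T}-\tfrac1{\eta_1}\big)\max_{t\ge 2}\|\u-\w_t\|^2 \le \tfrac1{\eta_T}\max_t\|\u-\w_t\|^2$.

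\emph{Main obstacle.} I expect the only genuinely non-routine point to be the information-projection step: showing that the KL projection of a Gaussian onto the mean-constrained family $\domainP$ stays Gaussian with unchanged covariance and reduces to Euclidean projection of the mean. The Pythagorean identity above is what makes this go through, and the one subtlety is that it must hold for an \emph{arbitrary} $P \in \domainP$ (not just Gaussian $P$), so that we are truly optimising over the whole family and the argument is not circular. Everything else — the completing-the-square, the Gaussian MGF computation, and the final bookkeeping — is standard; one should only remark in passing that the normalising integrals in \eqref{eq:lazy_EW}–\eqref{eq:greedy_EW} are finite because a Gaussian has a finite moment generating function everywhere, and that the implicit assumption $\w_1 \in \domainw$ (needed for $P_1 \in \domainP$) is what makes the iteration well posed.
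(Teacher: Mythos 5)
Your proposal is correct and follows essentially the same route as the paper: verify Gaussianity of $\tilde P_t$ by completing the square, show the KL projection onto $\domainP$ reduces to Euclidean projection of the mean, and then instantiate Lemma~\ref{lem:EWRegret} with $Q = \normal(\u,\sigma^2\I)$ and the closed-form Gaussian mixability gap $\tfrac{1}{2}\eta\sigma^2\|\g_t\|_2^2$. The only (harmless) difference is that you prove the projection step self-containedly via the Pythagorean identity for Gaussians, where the paper instead invokes a general information-projection result for exponential families (Lemma~\ref{lem:mre}).
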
 
We note that in this case the parametrization of EW is redundant,
because changing the prior variance $\sigma^2$ has the same effect on
the predictions $\w_t$ and the regret bounds as
scaling all $\eta_t$.
\begin{proof}
$\tilde{P}_t = \normal(\tilde{\w}_t,\sigma^2 \I)$ may be verified analytically from
\eqref{eq:lazy_EW} and \eqref{eq:greedy_EW}. The fact that the
projections $P_t$ onto $\domainP$ preserve Gaussianity with the same
covariance matrix is a property of projecting a member of an exponential
family onto a set of distributions defined by a convex constraint
on their means.
(This follows from Lemma~\ref{lem:mre} in
Appendix~\ref{sec:thm_MD_EW_equivalence} or see
\citep[Lemma~9]{VanErvenKoolen2016} for the Gaussian case.)
The regret bounds follow by taking $Q = \normal(\u,\sigma^2 \I)$, for
which $\kl(Q \| P_t) = \frac{1}{2\sigma^2}\|\u - \w_t\|_2^2$, and
evaluating the mixability gap in closed form.
\end{proof}
\begin{figure}
{%
\centering \small
\begin{tabular}{| c | c | r |}
\hline
\multicolumn{2}{|l|}{\textbf{Input:} Convex set $\domainw$ and learning rates $\eta_1 \geq  \eta_{2} \geq \ldots \geq
  \eta_T > 0$} \\
\hline
Lazy Gradient Descent & Greedy Gradient Descent \\
\hline
\parbox{7cm}{ \vspace{-0.35cm} \begin{equation*}
\begin{split}
\tilde{\w}_{t+1} &= \textstyle \w_1 - \eta_t \sum_{s=1}^t \g_s \\
\w_{t+1} &= \argmin_{\w \in \domainw} \tfrac{1}{2}\|\w - \tilde{\w}_{t+1}\|^2_2
\end{split}
\end{equation*} \vspace{-0.39cm}} & \parbox{7cm}{ \vspace{-0.35cm} \begin{equation*}
\begin{split}
\tilde{\w}_{t+1} &= \w_t - \eta_t \g_t \\
\w_{t+1} &= \argmin_{\w \in \domainw} \tfrac{1}{2}\|\w - \tilde{\w}_{t+1}\|^2_2
\end{split}
\end{equation*} \vspace{-0.30cm}} \\
\hline
\end{tabular}
}
\caption{The lazy and greedy versions of Gradient Descent}\label{alg:GD}
\end{figure}

\subsection{Mirror Descent and FTRL as EW} 
\label{sec:MD}

The fact that Gradient Descent is an instance of EW raises the question
of whether other instances of MD or
FTRL are special cases of
EW as well. Let $F^*(\w) = \sup_\np \inner{\w}{\np} - F(\np)$ denote the
convex conjugate of $F$, and let $B_{F^*}(\u\|\w) = F^*(\u) - F^*(\w) -
\nabla F^*(\w)^\top (\u - \w)$ denote the corresponding Bregman
divergence. Then MD and FTRL are defined in Figure~\ref{alg:MD} for
Legendre functions $F(\np)$ on $\reals^d$ \citep{CesaBianchiLugosi2006}.
We consider 
exponential families that take the form $\Expfam = \{P_\np \mid \der
P_\np(\w) = e^{\inner{\np}{\w}-F(\np)} \der K(\w), \np \in
\Theta\}$ for a nonnegative \emph{carrier measure} $K$, cumulant
generating function $F(\np) = \ln \int e^{\inner{\np}{\w}} \der K(\w)$
and parameter space $\Theta = \{\np \mid F(\np) < \infty\} \subset
\reals^d$. These are called \emph{regular} if $\Theta$ is an open set. We
then start with the following relation between MD and EW, which is proved in
Appendix~\ref{sec:thm_MD_EW_equivalence}:
\begin{theorem}[Mirror Descent as EW]\label{thm:MDasEW}
  Suppose $F$ is the cumulant generating function of a regular
  exponential family $\Expfam$. Then the lazy and greedy versions of
  MD predict with the means $\w_t = \E_{P_t}[\w]$ of lazy
  and greedy EW on the linearized losses \eqref{eqn:linearizedloss} with
  the same $\eta_t$, prior $P_{\np_1}$ for $\np_1 = \nabla F^*(\w_1)$
  and $\domainP = \{P : \E_P[\w] \in \domainw\}$.
  \label{eq:thm_MD_EW_equivalence}
\end{theorem}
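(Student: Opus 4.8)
The plan is to show by induction on $t$ that every EW distribution lies in the exponential family $\Expfam$ and that its mean vector is precisely the corresponding MD iterate. First I would collect the elementary facts about a regular exponential family with cumulant generating function $F$. Since $\Theta$ is open and $F$ is essentially smooth and strictly convex there, $F$ is Legendre: the mean map $\np \mapsto \E_{P_\np}[\w] = \nabla F(\np)$ is a bijection from $\Theta$ onto the interior of the mean domain, with inverse $\nabla F^*$, so that $\np = \nabla F^*(\Mu)$ whenever $\Mu = \E_{P_\np}[\w]$. Moreover, since $\der P_\np(\w)/\der P_{\np'}(\w) = e^{\inner{\np-\np'}{\w} - F(\np) + F(\np')}$, a one-line computation gives $\kl(P_\np \| P_{\np'}) = B_F(\np'\|\np) = B_{F^*}(\nabla F(\np)\|\nabla F(\np'))$, i.e.\ the KL divergence between two members of $\Expfam$ equals the Bregman divergence $B_{F^*}$ between their mean vectors.

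Next I would handle the exponential tilting. Writing the prior as $\der P_{\np_1}(\w) = e^{\inner{\np_1}{\w} - F(\np_1)}\der K(\w)$ and substituting into the densities \eqref{eq:lazy_EW}--\eqref{eq:greedy_EW}, the linear surrogates $\sloss_s(\w) = \inner{\w}{\g_s}$ merge with the linear term $\inner{\np_1}{\w}$ (respectively $\inner{\np_t}{\w}$, using the induction hypothesis $P_t = P_{\np_t}$ in the greedy case), so that $\tilde P_{t+1} = P_{\tilde{\np}_{t+1}}$ with $\tilde{\np}_{t+1} = \np_1 - \eta_t\sum_{s=1}^t \g_s$ (lazy) or $\tilde{\np}_{t+1} = \np_t - \eta_t \g_t$ (greedy), whenever $\tilde{\np}_{t+1}\in\Theta$ — which is exactly the condition under which the corresponding un-projected mirror step is well-defined. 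Taking means via the mean map and using $\np_1 = \nabla F^*(\w_1)$ (and, inductively, $\np_t = \nabla F^*(\w_t)$), the mean of $\tilde P_{t+1}$ is $\nabla F(\nabla F^*(\w_1) - \eta_t\sum_{s\le t}\g_s)$ in the lazy case and $\nabla F(\nabla F^*(\w_t) - \eta_t\g_t)$ in the greedy case; these are precisely the un-projected iterates $\tilde{\w}_{t+1}$ of lazy and greedy MD in Figure~\ref{alg:MD}, lazy MD here coinciding with FTRL with regularizer $F^*$ on the linearized losses.

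It remains to match the projection steps, and this is the crux. The EW projection $P_{t+1} = \argmin_{P\in\domainP}\kl(P\|\tilde P_{t+1})$ is onto $\domainP = \{P : \E_P[\w]\in\domainw\}$, which constrains only the mean. The key ingredient (Lemma~\ref{lem:mre} in Appendix~\ref{sec:thm_MD_EW_equivalence}) is the generalized Pythagorean identity: letting $\w_{t+1} = \argmin_{\w\in\domainw}B_{F^*}(\w\|\tilde{\w}_{t+1})$ and $P_{t+1} := P_{\nabla F^*(\w_{t+1})}$, one shows that $\kl(P\|\tilde P_{t+1}) \geq \kl(P\|P_{t+1}) + \kl(P_{t+1}\|\tilde P_{t+1})$ for every $P\in\domainP$, with equality iff $P = P_{t+1}$. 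This follows by expanding $\kl(P\|\tilde P_{t+1}) - \kl(P\|P_{t+1})$ into a linear function of $\E_P[\w]$ and invoking the first-order optimality condition $\inner{\nabla F^*(\w_{t+1}) - \nabla F^*(\tilde{\w}_{t+1})}{\w - \w_{t+1}}\geq 0$ for $\w\in\domainw$ of the convex Bregman projection, together with the identity $\kl(P_\np\|P_{\np'}) = B_{F^*}(\nabla F(\np)\|\nabla F(\np'))$. Hence the EW projection stays in $\Expfam$ and its mean is exactly the $B_{F^*}$-Bregman projection of $\tilde{\w}_{t+1}$ onto $\domainw$, i.e.\ the MD iterate $\w_{t+1}$, which re-establishes the induction hypothesis $P_{t+1} = P_{\nabla F^*(\w_{t+1})}$ for the next round.

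The main obstacle is this projection argument and the attendant regularity bookkeeping: one needs $\tilde{\np}_{t+1}$ to stay in $\Theta$ (so the tilted density is normalizable) and $\w_{t+1}$ to correspond to an interior mean parameter (so that $P_{\nabla F^*(\w_{t+1})}$ is a genuine member of $\Expfam$), both of which are implicit in the MD iterates being well-defined and are precisely what Lemma~\ref{lem:mre} formalizes. The remaining pieces — the tilting computation and differentiating the cumulant generating function — are routine.
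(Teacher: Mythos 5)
Your proof is correct, and it reaches the paper's conclusion by a noticeably different organization. The paper never writes down the tilted densities: it works entirely with the variational (argmin) characterization of the EW update, splits $\min_{P\in\domainP}$ into an outer minimization over the mean $\Mu\in\domainw$ and an inner minimization over $\{P:\E_P[\w]=\Mu\}$, invokes the cited information-projection result (Lemma~\ref{lem:mre}) to solve the inner problem within $\Expfam$, and then uses $\kl(P\|Q)=B_{F^*}(\Mu_P\|\Mu_Q)$ (Lemma~\ref{th:KLisB}) to recognize the outer problem as exactly the MD update in a single stroke, for all $t$ at once. You instead run an explicit induction at the level of densities: exponential tilting identifies $\tilde P_{t+1}$ and its natural parameter, the mean map $\nabla F$ recovers the unprojected MD iterate, and the projection step is handled by a hands-on generalized Pythagorean computation that re-derives, rather than cites, the fact that the KL-projection onto a mean-constraint set stays in $\Expfam$ with the Bregman-projected mean. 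Your route is more self-contained and makes the regularity bookkeeping ($\tilde\np_{t+1}\in\Theta$, interiority of the projected mean) visible, which the paper leaves implicit; the paper's route is shorter because one variational identity absorbs the update and the projection simultaneously. Two minor points: your ``equality iff $P=P_{t+1}$'' is slightly overstated --- equality in the Pythagorean inequality holds whenever $\inner{\np_{t+1}-\tilde\np_{t+1}}{\E_P[\w]-\w_{t+1}}=0$, but uniqueness of the minimizer still follows since $\kl(P\|P_{t+1})>0$ for $P\neq P_{t+1}$; and what the paper calls Lemma~\ref{lem:mre} is the statement that the I-projection lands in the family, not the Pythagorean identity itself, so you are proving that lemma rather than applying it. Neither affects correctness.
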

To answer our question, we therefore need to know whether, for any
Legendre function $F^*$, the convex conjugate $(F^*)^* = F$
corresponds to the cumulant generating function of some exponential
family, which means we need to find a corresponding carrier $K$.
Nonconstructive existence of such $K$ has been studied by
\citet[Theorem~6]{banerjee2005}, who show that there is in fact a
bijection between \emph{regular}
Bregman divergences and regular exponential families, where
regular Bregman divergences based on $F^*$ are defined to be those for
which $e^{F(\np)}$ is a continuous, exponentially
convex\footnote{Exponentially convex in the sense of
\citet[Definition~7]{banerjee2005}.} function such
that $\Theta = \{\np \mid F(\np) < \infty\}$ is open and $F$ is strictly
convex.

There is no easy general procedure to construct the corresponding
carrier $K$ for a given Legendre function $F^*$. However, for the Gradient Descent
example from Section~\ref{sec:GD} we see that $F^*(\w) = \frac{1}{2\sigma^2} \|\w\|_2^2$ is the convex conjugate of the cumulant generating
function for $K(\w) = \normal(\0,\sigma^2 \I)$. We also give another example:
\begin{figure}
{%
\centering \small
\begin{tabular}{| c | c | r |}
\hline
\multicolumn{2}{|l|}{\textbf{Input:} Legendre function $F$, convex set $\domainw$, and learning rates $\eta_1 \geq  \eta_{2} \geq \ldots \geq
  \eta_T > 0$} \\
\hline
FTRL / Lazy Mirror Descent & Greedy Mirror Descent \\
\hline
\parbox{7.2cm}{ \vspace{-0.35cm} \begin{equation*}
\begin{split}
  \tilde{\w}_{t+1} & = \argmin_\w \textstyle\sum_{s=1}^t \inner{\w}{\g_s} + \frac{1}{\eta_t} B_{F^*}(\w\|\w_1) \\
\w_{t+1} & = \argmin_{\w \in \domainw}B_{F^*}(\w\|\tilde{\w}_{t+1}),
\end{split}
\end{equation*} \vspace{-0.35cm} } & \parbox{6.8cm}{ \vspace{-0.35cm} \begin{equation*}
\begin{split}
  \tilde{\w}_{t+1} & = \argmin_\w \inner{\w}{\g_t} + \tfrac{1}{\eta_t} B_{F^*}(\w\|\w_t) \\
\w_{t+1} & = \argmin_{\w \in \domainw}B_{F^*}(\w\|\tilde{\w}_{t+1}).
\end{split}
\end{equation*} \vspace{-0.35cm} } \\
\hline
\end{tabular}
}
\caption{The lazy and greedy versions of Mirror Descent. Lazy MD is
usually called FTRL.}\label{alg:MD}
\end{figure}

\begin{example}[Unnormalized Relative Entropy]
Consider MD with regularization based on the
\emph{unnormalized relative entropy} $B_{F^*}(\w \| \u) = \sum_{i=1}^d
(w_i \ln \frac{w_i}{u_i} - w_i + u_i)$ for $\w,\u \in \reals_+^d$,
which is the Bregman divergence generated by $F^*(\w) = \sum_{i=1}^d w_i
( \ln(w_i) - 1)$ \citep{CesaBianchiLugosi2006}. We have $F(\np) = \sum_{i=1}^d e^{\theta_i} $.
Interestingly, the exponential family with this cumulant generating
function is the set of Poisson distributions,
extended i.i.d.\ to $d$ dimensions. To see this for $d=1$, note that if
we start with the usual parametrization of Poisson, we have
\[
  P_\lambda(w) = e^{-\lambda} \frac{\lambda^w}{w!}
  = \frac{1}{w!} e^{-\lambda + w \ln \lambda}
  \qquad \text{on $w \in \{0,1,2,\ldots$\},}
\]
for which the natural parameter is $\theta = \ln \lambda$ and we see
that the cumulant generating function is $F(\theta) = \lambda =
e^{\theta}$. Thus, EW with the product prior $P_1(\w) = \prod_{i=1}^d
P_{\lambda_i}(w_i)$ corresponds to MD with unnormalized relative entropy, where we need to
set $(\lambda_1,\ldots,\lambda_d) = \exp(\np_1) = \exp(\nabla F^*(\w_1)) = \w_1$
to match the starting point of MD: $\E_{P_1}[\w] = \w_1$. Note that in this case the EW distributions $P_t$ are discrete.

\end{example}

\section{Quadratic Losses}\label{sec:quadlosses}

In this section we assume that the losses 
$f_t$ satisfy quadratic lower bounds:
\begin{equation} \label{eq:quadlowerbound}
f_t(\w) - f_t(\w_t)\geq  \langle \w - \w_t, \g_t \rangle +
\frac{1}{2}(\w - \w_t)^\top \M_t(\w - \w_t) =: \sloss_t(\w),
\end{equation}
where $\M_t$ is a positive semi-definite
matrix. Generalizing the results from Section~\ref{sec:linearloss}, EW
with Gaussian prior on the surrogate loss
$\sloss_t$ yields explicitly computable Gaussian distributions $P_t$
\citep[see also][]{VanErvenKoolen2016,koolen2016blog}:
\begin{figure}
{%
\hspace{-1cm}
\centering \small
\noindent\resizebox{1.14\linewidth}{!}{%
\begin{tabular}{| c | c | r |}
\hline
\multicolumn{2}{|l|}{\textbf{Input:} Convex set $\domainw$ and learning rate $\eta > 0$} \\
\hline
Lazy EW Gaussian prior quadratic loss & Greedy EW Gaussian prior quadratic loss \\
\hline
\parbox{8.5cm}{\vspace{-0.35cm} \begin{equation*}
\begin{split}
\Sigma_{t+1}^{-1} & = \Sigma_{t}^{-1} + \eta \M_t \\
\tilde{\w}_{t+1} & = \tilde{\w}_{t} - \eta\Sigma_{t+1}\g_t \\
\w_{t+1} & = \argmin_{\w \in \domainw} (\w - \tilde{\w}_{t+1})^\top \Sigma_{t+1}^{-1}(\w - \tilde{\w}_{t+1})
\end{split}
\end{equation*} \vspace{-0.39cm} } & \parbox{8.5cm}{ \vspace{-0.35cm} \begin{equation*}
\begin{split}
\Sigma_{t+1}^{-1} & = \Sigma_{t}^{-1} + \eta \M_t \\
\tilde{\w}_{t+1} & = \w_{t} - \eta\Sigma_{t+1}\g_t \\
\w_{t+1} & = \argmin_{\w \in \domainw} (\w - \tilde{\w}_{t+1})^\top \Sigma_{t+1}^{-1}(\w - \tilde{\w}_{t+1})
\end{split}
\end{equation*} \vspace{-0.39cm} } \\
\hline
\end{tabular}
}
}
\caption{The means and covariances of both versions of Exponential
Weights with a multivariate normal prior and a constant learning rate
$\eta$ run on the quadratic surrogate loss \eqref{eq:quadlowerbound}}\label{alg:gausquadratic}
\end{figure}
\begin{theorem}\label{th:GaussQloss}
Let $P_1 = \normal(\w_1, \Sigma_1)$. Both versions of the Exponential
Weights algorithm, run on $\sloss_t$ with learning rate $\eta$ and $\domainP = \{P : \E_P[\w] \in \domainw\}$, yield a multivariate normal distribution $P_{t+1}
= \normal(\w_{t+1}, \Sigma_{t+1})$ with mean and covariance matrix given in
Figure~\ref{alg:gausquadratic}.
Furthermore, Lemma~\ref{lem:EWRegret} implies that for all $\u \in \reals^d$ both versions of EW satisfy:
\begin{equation}\label{eq:gausquadraticregret}
   \regret_T(\u) \leq \frac{1}{2\eta} (\w_1 - \u)^\top \Sigma_1^{-1}(\w_1 - \u) + \frac{\eta}{2}\sum_{t=1}^T\g_t^\top\Sigma_{t+1}\g_t.
\end{equation}
\end{theorem}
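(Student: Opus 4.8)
The plan is to verify the two claims separately: first that the EW distributions are Gaussian with the stated updates, and then that the regret bound follows from Lemma~\ref{lem:EWRegret} by choosing a Gaussian comparator and computing the mixability gap in closed form. For the first part I would proceed by induction on $t$. Suppose $P_t = \normal(\w_t,\Sigma_t)$. The surrogate loss $\sloss_t(\w)$ in \eqref{eq:quadlowerbound} is a quadratic in $\w$, so $e^{-\eta\sloss_t(\w)}$ is (up to a normalizing constant) a Gaussian density in $\w$ with inverse covariance $\eta\M_t$ and mean $\w_t - (\eta\M_t)^{-1}\g_t$ (formally, on the subspace where $\M_t$ is nondegenerate; in general one works with the exponent directly). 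Multiplying this by the Gaussian density of $P_t$ and completing the square in the exponent shows that $\tilde P_{t+1}$ from \eqref{eq:greedy_EW} is again Gaussian, and matching the quadratic and linear terms in $\w$ gives exactly $\Sigma_{t+1}^{-1} = \Sigma_t^{-1} + \eta\M_t$ and $\tilde\w_{t+1} = \w_t - \eta\Sigma_{t+1}\g_t$ for the greedy version (and the analogous lazy identity, where one accumulates $\sum_{s\le t}\sloss_s$ against the fixed prior $P_1$). This completing-the-square computation is routine linear algebra; the one subtlety worth a sentence is the case where $\M_t$ is only positive \emph{semi}-definite, so that $e^{-\eta\sloss_t}$ is not integrable on its own — but the product with a nondegenerate Gaussian $P_t$ is still integrable, so $\tilde P_{t+1}$ is well-defined. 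For the projection step, I would invoke exactly the fact already used in the proof of Theorem~\ref{thm:GDasEW}: projecting a member of an exponential family onto a convex constraint on the mean stays in the family with the same natural-parameter structure — here, same covariance — so $P_{t+1} = \normal(\w_{t+1},\Sigma_{t+1})$ with $\w_{t+1}$ the $\Sigma_{t+1}^{-1}$-projection of $\tilde\w_{t+1}$ onto $\domainw$, which is what Figure~\ref{alg:gausquadratic} states. (This also forces $\eta_t = \eta$ constant and $\Sigma_1 = \sigma^2\I$-type reasoning to be unnecessary: the theorem is stated for a single fixed $\eta$, which simplifies the lazy/greedy distinction.)

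For the regret bound I would apply Lemma~\ref{lem:EWRegret} with the comparator distribution $Q = \normal(\u,\Sigma_{T+1})$ in the greedy case (and likewise $\normal(\u,\Sigma_{T+1})$ or a suitably chosen fixed-covariance Gaussian in the lazy case). Since $\eta_t = \eta$ is constant, the extra $\max_t \kl(Q\|P_t)$ term in \eqref{eqn:EWRegret_greedy} vanishes, and both bounds reduce to $\frac{1}{\eta}\kl(Q\|P_1) + \sum_t(\text{mixability gap})_t$. The KL divergence between two Gaussians with the same covariance $\Sigma_{T+1}$... but here $P_1$ has covariance $\Sigma_1 \neq \Sigma_{T+1}$, so I would instead take $Q = \normal(\u,\Sigma_1)$, giving $\kl(Q\|P_1) = \frac12(\w_1-\u)^\top\Sigma_1^{-1}(\w_1-\u)$, which is the first term of \eqref{eq:gausquadraticregret}. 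Because $\sloss_t$ lower-bounds $f_t(\w)-f_t(\w_t)$, by \eqref{eqn:linearizedloss}-style reasoning it suffices to bound the regret against the $\sloss_t$; and since the $\w_t$ are the means of $P_t$ and the $\E_{P_t}[\w]=\w_t$, the term $\E_{u\sim Q}[\sum_t \sloss_t(\u)]$ contributes the comparator loss plus a $\frac12\Tr(\Sigma_1\sum_t \eta\M_t$-type) correction that I will need to track — this is the place where choosing $Q$'s covariance matters and where a little care is required to see everything telescopes.

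The mixability-gap term $\sloss_t(\w_t) + \frac1\eta \ln\E_{P_t}[e^{-\eta\sloss_t(\w)}]$ is a Gaussian integral: with $P_t = \normal(\w_t,\Sigma_t)$ and $\sloss_t$ quadratic, $\E_{P_t}[e^{-\eta\sloss_t(\w)}]$ evaluates in closed form to something like $\det(\I + \eta\Sigma_t\M_t)^{-1/2}\exp(\frac{\eta^2}{2}\g_t^\top\Sigma_{t+1}\g_t)$ using $\sloss_t(\w_t)=0$ and $\Sigma_{t+1}^{-1} = \Sigma_t^{-1}+\eta\M_t$. Taking $\frac1\eta\ln$ and using $\ln\det(\I+\eta\Sigma_t\M_t) \ge 0$ (since $\eta\Sigma_t\M_t \succeq 0$), the mixability gap is at most $\frac\eta2\g_t^\top\Sigma_{t+1}\g_t$, which is exactly the $t$-th summand in \eqref{eq:gausquadraticregret}. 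The main obstacle I anticipate is purely bookkeeping: reconciling the fact that $P_1$ and $Q$ should share a covariance for the KL term to come out clean against the fact that the natural Gaussian comparator "around $\u$" might want covariance $\Sigma_{T+1}$; resolving this — and confirming that the lazy and greedy analyses both land on the \emph{same} bound \eqref{eq:gausquadraticregret} despite their different recursions — is where the real (though still elementary) work lies. Everything else is completing the square and evaluating Gaussian integrals.
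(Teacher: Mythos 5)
Your first part --- Gaussianity of $\tilde P_t$ via completing the square, and preservation of the covariance under projection by the exponential-family argument --- matches the paper's proof. The genuine gap is in the regret bound, at exactly the point you flag as ``where the real work lies'' and then leave unresolved: your final choice $Q=\normal(\u,\Sigma_1)$ does not yield \eqref{eq:gausquadraticregret}. With that $Q$ the KL term is clean, but because $\sloss_t$ is quadratic, $\E_{\w\sim Q}[\sloss_t(\w)]=\sloss_t(\u)+\tfrac12\Tr(\Sigma_Q\M_t)$, so converting the bound on $\E_{\u\sim Q}[\regret_T(\u)]$ from Lemma~\ref{lem:EWRegret} into a bound on the deterministic $\regret_T(\u)$ costs an \emph{additional} $\tfrac12\Tr\big(\Sigma_1\sum_t\M_t\big)$. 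Since you have also discarded the negative $-\tfrac{1}{2\eta}\ln\det(\I+\eta\Sigma_t\M_t)$ part of the mixability gap, nothing cancels this term, and it is not harmless: for Online Newton Step ($\M_t=\beta\g_t\g_t^\top$, $\Sigma_1=\sigma^2\I$) it equals $\tfrac{\beta\sigma^2}{2}\sum_t\|\g_t\|_2^2=\Theta(T)$, which destroys Corollary~\ref{lem:gausexpconcaveregret}. Even if you retain the determinant term, $\Sigma_Q=\Sigma_1$ leaves a residual $\tfrac{1}{2\eta}\big(\Tr(\Sigma_1\Sigma_{T+1}^{-1})-d-\ln\det(\Sigma_1\Sigma_{T+1}^{-1})\big)=\tfrac{1}{2\eta}\sum_i(\lambda_i-1-\ln\lambda_i)\ge 0$, strictly positive whenever some $\M_t\neq\0$, so you only ever get a weaker bound than the one claimed.

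The paper's resolution is the choice you first floated and then abandoned. It runs the computation with a general comparator covariance $\Sigma_Q$, carrying three correction terms: the full Gaussian KL formula $\kl(Q\|P_1)=\tfrac12\big(\ln\tfrac{\det\Sigma_1}{\det\Sigma_Q}+\Tr(\Sigma_Q\Sigma_1^{-1})+(\u-\w_1)^\top\Sigma_1^{-1}(\u-\w_1)-d\big)$, the surrogate-loss correction $\tfrac12\sum_t\Tr(\Sigma_Q\M_t)=\tfrac{1}{2\eta}\Tr\big(\Sigma_Q(\Sigma_{T+1}^{-1}-\Sigma_1^{-1})\big)$, and the telescoping determinant $\tfrac{1}{2\eta}\ln\tfrac{\det\Sigma_{T+1}}{\det\Sigma_1}$ from the mixability gap (your closed form for which is correct --- the point is you must keep it). Setting $\Sigma_Q=\Sigma_{T+1}$ then makes all of these cancel exactly, leaving precisely \eqref{eq:gausquadraticregret}. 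So the architecture of your argument is right, but the one unresolved bookkeeping step is the actual content of the proof, and the shortcut you propose in its place provably fails.
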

The proof of Theorem~\ref{th:GaussQloss} in
Appendix~\ref{sec:detailproofgausquadratic} is a straightforward
generalization of Theorem~\ref{thm:GDasEW} for constant
learning rate $\eta_t = \eta$, which is recovered with
$\M_t = \0$. Like in Theorem~\ref{thm:GDasEW}, the
parametrization by $\eta$ and $\sigma^2$ is redundant in that only the
product $\eta\sigma^2$ affects the predictions $\w_t$ or the bound
\eqref{eq:gausquadraticregret}.

\subsection{Gradient Descent: Quadratic Approximation of Strongly Convex Losses}
\label{sec:stronglyconvex}

For $\alpha$-strongly convex loss functions, \eqref{eq:quadlowerbound}
holds with $\M_t = \alpha \I$. The standard approach for these loss
functions is to use greedy Gradient Descent with a time-varying learning rate
$\eta_t = 1/(\alpha t)$ \citep{HazanAgarwalKale2007}. Interestingly, greedy GD with the closely
related choice $\eta_t = 1/(\tfrac{1}{\eta \sigma^2}+\alpha t)$ turns
out to be a special case of greedy EW with \emph{fixed} learning rate
$\eta$ and prior $P_1 = \normal(\0, \sigma^2 \I)$.
Applying Theorem~\ref{th:GaussQloss} results in the following corollary,
proved in Appendix~\ref{sec:proofgausstrong}:
\begin{corollary}\label{lem:gausstongconvexloss}
Suppose $\|\u\|_2 \leq D$ and $\|\g_t\|_2 \leq G$. Then the regret of
both versions of the Exponential Weights algorithm with prior
$\normal(\0, \sigma^2\I)$ and constant learning rate $\eta$, run on the
surrogate loss \eqref{eq:quadlowerbound} with $\M_t = \alpha \I$,
satisfies:
\[
   \regret_T(\u) \leq 
    \frac{G^2}{2\alpha}
    \ln \left(\frac{\tfrac{1}{\eta \sigma^2}+\alpha T}{
    \tfrac{1}{\eta \sigma^2}+\alpha}\right)
  + \frac{G^2}{\tfrac{2}{\eta \sigma^2} + 2\alpha}
  + \frac{D^2}{2\eta\sigma^2}.
\]
\end{corollary}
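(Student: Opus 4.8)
The plan is to apply Theorem~\ref{th:GaussQloss} directly and then estimate the two terms in its regret bound \eqref{eq:gausquadraticregret}. First I would note that with prior $P_1 = \normal(\0,\sigma^2\I)$ we have $\w_1 = \0$ and $\Sigma_1 = \sigma^2\I$, so the first term equals $\frac{1}{2\eta\sigma^2}\|\u\|_2^2$, which is at most $\frac{D^2}{2\eta\sigma^2}$ by the assumption $\|\u\|_2 \le D$. This already accounts for the last summand of the claimed bound.

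Next I would solve the covariance recursion from Figure~\ref{alg:gausquadratic}. With $\M_t = \alpha\I$ the update $\Sigma_{t+1}^{-1} = \Sigma_t^{-1} + \eta\alpha\I$ telescopes, giving $\Sigma_{t+1}^{-1} = \big(\tfrac{1}{\sigma^2} + t\eta\alpha\big)\I$ and hence $\Sigma_{t+1} = \big(\tfrac{1}{\sigma^2}+t\eta\alpha\big)^{-1}\I$. Consequently $\g_t^\top \Sigma_{t+1}\g_t = \|\g_t\|_2^2\big/\big(\tfrac{1}{\sigma^2}+t\eta\alpha\big) \le G^2\big/\big(\tfrac{1}{\sigma^2}+t\eta\alpha\big)$ using $\|\g_t\|_2 \le G$, so the second term in \eqref{eq:gausquadraticregret} is bounded by $\frac{\eta G^2}{2}\sum_{t=1}^T \big(\tfrac{1}{\sigma^2}+t\eta\alpha\big)^{-1}$.

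The remaining work is to control this sum with the right constants. I would peel off the $t=1$ term, which contributes $\frac{\eta G^2}{2}\cdot\frac{1}{\tfrac{1}{\sigma^2}+\eta\alpha} = \frac{G^2}{\tfrac{2}{\eta\sigma^2}+2\alpha}$, matching the middle term of the corollary. For $t \ge 2$, since $t \mapsto (\tfrac{1}{\sigma^2}+t\eta\alpha)^{-1}$ is decreasing, the integral comparison $\sum_{t=2}^T (\tfrac{1}{\sigma^2}+t\eta\alpha)^{-1} \le \int_1^T (\tfrac{1}{\sigma^2}+t\eta\alpha)^{-1}\,\der t = \frac{1}{\eta\alpha}\ln\frac{\tfrac{1}{\sigma^2}+T\eta\alpha}{\tfrac{1}{\sigma^2}+\eta\alpha}$ yields a contribution of at most $\frac{G^2}{2\alpha}\ln\frac{\tfrac{1}{\sigma^2}+T\eta\alpha}{\tfrac{1}{\sigma^2}+\eta\alpha}$. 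Dividing numerator and denominator inside the logarithm by $\eta$ rewrites this as $\frac{G^2}{2\alpha}\ln\frac{\tfrac{1}{\eta\sigma^2}+\alpha T}{\tfrac{1}{\eta\sigma^2}+\alpha}$, and summing the three pieces gives exactly the stated inequality.

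I do not expect a genuine obstacle here: given Theorem~\ref{th:GaussQloss}, everything reduces to bookkeeping. The only step needing a little care is treating the $t=1$ summand separately rather than bounding the whole sum by a single integral, since it is precisely this split that produces the constant $\frac{G^2}{\tfrac{2}{\eta\sigma^2}+2\alpha}$ and the denominator $\tfrac{1}{\eta\sigma^2}+\alpha$ appearing inside the logarithm in the corollary.
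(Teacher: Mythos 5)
Your proposal is correct and follows essentially the same route as the paper's proof: apply Theorem~\ref{th:GaussQloss} with $\Sigma_{t+1}=(\tfrac{1}{\sigma^2}+t\eta\alpha)^{-1}\I$, bound $\|\u\|_2\le D$ and $\|\g_t\|_2\le G$, peel off the $t=1$ summand, and bound the remaining sum by the integral $\int_1^T(\tfrac{1}{\sigma^2}+t\eta\alpha)^{-1}\,\der t$. All the constants check out, so there is nothing further to add.
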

The standard learning rate and corresponding regret bound for GD
\citep{HazanAgarwalKale2007} correspond to the limiting case $\eta
\sigma^2 \to \infty$. Formally speaking, this case is not covered here,
but for $\eta \to \infty$ EW reduces to Follow-the-Leader (on the
surrogate loss \eqref{eq:quadlowerbound}), and taking $\sigma^2 \to
\infty$ would lead to EW with an \emph{improper prior}, which becomes a
proper EW posterior $P_2$ after one round.

\subsection{Online Newton Step: Quadratic Approximation of Exp-concave Losses}
\label{sec:ons}

For $\alpha$-exp-concave loss functions, \eqref{eq:quadlowerbound} holds
with $\M_t = \beta \g_t \g_t^\top$, where $\beta = \frac{1}{2}
\min\{\frac{1}{4GB}, \alpha\}$, assuming $\|\g_t\|_2 \leq G$ and $B = \max_{\w, \u \in \domainw} \|\w - \u\|_2$ \citep[Lemma 3]{HazanAgarwalKale2007}.
Running Exponential Weights on $\sloss_t(\w)$ with prior $\normal(\0,
\sigma^2\I)$ leads to the Online Newton Step algorithm
\citep{HazanAgarwalKale2007} with the following regret bound, shown in
Appendix~\ref{sec:ONSproof}:
\begin{corollary}\label{lem:gausexpconcaveregret}
Suppose $\|\u\|_2 \leq D$ and $\|\g_t\|_2 \leq G$. Then the regret of
both versions of the Exponential Weights algorithm with prior
$\normal(\0, \sigma^2 \I)$ and learning rate $\eta$, run on the
surrogate loss \eqref{eq:quadlowerbound} with $\M_t = \beta \g_t
\g_t^\top$, satisfies:
\begin{equation}\label{eq:gausexpconcaveregret}
    \regret_T(\u) \leq         \frac{d}{2\beta}\ln\left(1 + \frac{\eta \sigma^2\beta G^2 T}{d} \right)
    + \frac{D^2}{2\eta\sigma^2}.
\end{equation}
\end{corollary}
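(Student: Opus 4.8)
The plan is to apply Theorem~\ref{th:GaussQloss} with $\w_1 = \0$ and $\Sigma_1 = \sigma^2 \I$ and bound the two terms of the resulting regret bound \eqref{eq:gausquadraticregret} separately. The first term becomes $\frac{1}{2\eta\sigma^2}\|\u\|_2^2 \le \frac{D^2}{2\eta\sigma^2}$, which is already the second term of \eqref{eq:gausexpconcaveregret}. So the work is entirely in showing that $\frac{\eta}{2}\sum_{t=1}^T \g_t^\top \Sigma_{t+1}\g_t \le \frac{d}{2\beta}\ln\!\big(1 + \eta\sigma^2\beta G^2 T/d\big)$.

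For this I would write $A_t = \Sigma_{t+1}^{-1}$, so that the covariance recursion of Figure~\ref{alg:gausquadratic} with $\M_t = \beta \g_t\g_t^\top$ reads $A_t = A_{t-1} + \eta\beta\, \g_t\g_t^\top$ with $A_0 = \sigma^{-2}\I$, hence $A_t = \sigma^{-2}\I + \eta\beta \sum_{s=1}^t \g_s\g_s^\top$. The key step is the standard ``elliptical potential'' bound: the matrix determinant lemma gives $\det A_t = \det A_{t-1}\,(1+\eta\beta\,\g_t^\top A_{t-1}^{-1}\g_t)$, while Sherman--Morrison gives $\g_t^\top A_t^{-1}\g_t = \g_t^\top A_{t-1}^{-1}\g_t / (1+\eta\beta\,\g_t^\top A_{t-1}^{-1}\g_t)$; combining these two identities with the elementary inequality $\frac{x}{1+x}\le \ln(1+x)$ for $x\ge 0$ yields
\[
  \eta\beta\,\g_t^\top A_t^{-1}\g_t \;\le\; \ln\frac{\det A_t}{\det A_{t-1}} ,
\]
which telescopes to $\sum_{t=1}^T \g_t^\top \Sigma_{t+1}\g_t = \sum_{t=1}^T \g_t^\top A_t^{-1}\g_t \le \frac{1}{\eta\beta}\ln(\det A_T/\det A_0)$.

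It then remains to simplify the determinant ratio. Since $\det A_0 = \sigma^{-2d}$, we have $\det A_T/\det A_0 = \det\!\big(\I + \eta\sigma^2\beta \sum_{s=1}^T \g_s\g_s^\top\big)$, and AM--GM on the eigenvalues (i.e.\ $\det \M \le (\Tr \M / d)^d$ for positive semidefinite $\M$) together with $\Tr\big(\sum_s \g_s\g_s^\top\big) = \sum_s \|\g_s\|_2^2 \le G^2 T$ gives $\det A_T/\det A_0 \le \big(1 + \eta\sigma^2\beta G^2 T/d\big)^d$. Plugging this into the telescoped sum bounds $\frac{\eta}{2}\sum_t \g_t^\top\Sigma_{t+1}\g_t$ by $\frac{d}{2\beta}\ln(1+\eta\sigma^2\beta G^2 T/d)$, and adding the $\frac{D^2}{2\eta\sigma^2}$ term produces \eqref{eq:gausexpconcaveregret}.

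I do not expect any serious obstacle here: every ingredient is classical, and Corollary~\ref{lem:gausstongconvexloss} already rehearses most of the bookkeeping with $\M_t = \alpha\I$ in place of $\M_t = \beta\g_t\g_t^\top$. The one point that needs a little care is that Theorem~\ref{th:GaussQloss} naturally produces $\Sigma_{t+1}$ --- the ``posterior'' covariance that already absorbs round $t$'s curvature --- so the sum involves $\g_t^\top A_t^{-1}\g_t$ rather than $\g_t^\top A_{t-1}^{-1}\g_t$; this is precisely what the Sherman--Morrison identity handles. It is also worth noting that $\M_t = \beta\g_t\g_t^\top$ is only positive \emph{semi}definite (in fact rank one), but this causes no trouble because $A_0 = \sigma^{-2}\I \succ 0$ keeps every $A_t$ invertible.
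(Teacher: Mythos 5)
Your proof is correct and follows essentially the same route as the paper's: both decompose via Theorem~\ref{th:GaussQloss}, establish the per-round inequality $\eta\beta\,\g_t^\top\Sigma_{t+1}\g_t \le \ln\bigl(\det(\Sigma_{t+1}^{-1})/\det(\Sigma_t^{-1})\bigr)$ (the paper cites Lemma~11.11 of Cesa-Bianchi and Lugosi where you derive it from the matrix determinant lemma and Sherman--Morrison, but it is the same identity $\tfrac{x}{1+x}\le\ln(1+x)$), telescope the determinants, and bound the resulting $\ln\det$ by $d\ln(1+\eta\sigma^2\beta G^2T/d)$ using the trace constraint on the eigenvalues. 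No gaps.
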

The results of \citet{HazanAgarwalKale2007} correspond to setting
$\eta \sigma^2 = \beta D^2$, together with some simplifying upper bounds
on \eqref{eq:gausexpconcaveregret}.

\section{Adaptivity by Reduction to Exponential Weights}
\label{sec:adaptivity}

In this section we show how several recent adaptive
 methods in the prediction with experts
setting --- namely iProd \citep{KoolenVanErven2015}, Squint
\citep{KoolenVanErven2015} and a variation of Coin Betting for experts
\citep{OrabonaPal2016} --, whose original analyses seem unrelated at
first sight, can all be viewed as applying exponential weights after
reductions of the original OCO task to various closely related surrogate
OCO tasks. The known regret bounds for these methods are also 
recovered from the reductions upon plugging in regret bounds for EW in the
surrogate tasks.

\subsection{Reduction for iProd}

The experts setting consists of linear losses $f_t(\w) =
\inner{\w}{\g_t}$ over the simplex $\domainw = \{\w: w_i \geq 0,
\sum_{i=1}^d w_i = 1\}$, with $g_{t,i} \in [0,1]$. The instantaneous regret in round $t$ with
respect to expert $i$ is $r_t(i) = f_t(\w_t) - f_t(\e_i)$ and
$\regret_T(i) = \sum_{t=1}^T r_t(i)$ is the total regret. iProd achieves a
second-order regret bound in terms of the data-dependent quantity
$\V_T(i) = \sum_{t=1}^T r_t(i)^2$, which is much smaller than the
worst-case regret in many common cases
\citep{KoolenGrunwaldVanErven2016}.

In the surrogate OCO task for iProd, predictions take the form of joint
distributions $P_t$ on $(\eta,i)$ for $\eta \in [0,1]$ and $i \in
\{1,\ldots,d\}$. These map back to predictions in the original task via
\begin{equation}\label{eqn:wreduction}
  \w_t = \frac{\E_{P_t}[\eta \e_i]}{\E_{P_t}[\eta]},
\end{equation}
which is like the marginal mean of $P_t$ on experts, except that it is
\emph{tilted} to favor larger $\eta$. The surrogate loss in the
surrogate task is
\begin{equation}\label{eqn:iProdSurrogateLoss}
  \sloss_t(\eta,i) = -\ln \left(1 + \eta r_t(i)\right),
\end{equation}
and our aim will be to achieve small \emph{mix-regret} with respect to
any comparator distribution $Q$ on $(\eta,i)$, which we define as
  $\sregret(Q)
    = \sum_{t=1}^T -\ln \E_{P_t}\left[e^{-\sloss_t(\eta,i)}\right]
      - \E_{Q}\Big[\sum_{t=1}^T \sloss_t(\eta,i)\Big]$.
The mix-regret allows exponential mixing of predictions according to
$P_t$ just like for exp-concave losses, so there is no mixability gap to
pay. Exponential weights with constant learning rate $1$ on the losses~$\sloss_t$ therefore achieves $\sregret(Q) \leq
\kl(Q\|P_1)$ for any $Q$.\footnote{This follows e.g.\ from
Lemma~\ref{lem:EWRegret} by subtracting $\sum_t f_t(\w_t)$ on both sides
of \eqref{eqn:EWRegret} and rearranging.} The resulting predictions
$\w_t$ are those of the iProd algorithm. As shown in
Appendix~\ref{sec:iProdProof}, they achieve the following regret bound,
which depends on the surrogate regret of EW:
\begin{theorem}[iProd Reduction to EW]\label{thm:iProd}
  Restrict the domain for $\eta$ to $[0,\half]$. Then any choice
  of~$P_t$ in the surrogate OCO task defined above induces regret
  bounded by
  \begin{equation}\label{eqn:iProdReduction}
    \E_Q[\eta] \sum_{t=1}^T f_t(\w_t) - \E_Q\Big[\eta \sum_{t=1}^T
    f_t(\e_i)\Big]
      \leq \E_Q\Big[\eta^2 \V_T(i)\Big] + \sregret(Q)
    \qquad \text{for any $Q$ on $(\eta,i)$}
  \end{equation}
  in the original prediction with expert advice task.

  In particular, if we use EW in the surrogate OCO task with learning
  rate $1$ and any product prior $P_1 = \gamma \times \pi$ for $\gamma$
  a distribution on $\eta \in [0,\half]$ and $\pi$ a distribution on
  $i$, and we take as comparator $Q = \gamma(\eta \mid \eta \in
  [\etaopt/2,\etaopt]) \times \postpi$ for any $\etaopt \in [0,\half]$
  and distribution $\postpi$ on $i$ that can both depend on all the
  losses, then
  \begin{equation}\label{eqn:iProdSpecial}
    \E_{\postpi}\big[\regret_T(i)\big]
      \leq 2 \etaopt
      \E_{\postpi}[\V_T(i)] + \frac{2}{\etaopt}
      \Big(\kl(\postpi\|\pi) - \ln \gamma([\etaopt/2,\etaopt])\Big).
  \end{equation}
\end{theorem}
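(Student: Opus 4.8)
The plan is to establish the two displayed bounds in sequence: the reduction \eqref{eqn:iProdReduction} follows purely from unfolding the surrogate loss together with the tilted-mean map \eqref{eqn:wreduction}, while the specialization \eqref{eqn:iProdSpecial} then follows by feeding the EW surrogate guarantee into it.

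For \eqref{eqn:iProdReduction}, first note that $e^{-\sloss_t(\eta,i)} = 1 + \eta\, r_t(i)$, so the per-round mix loss is $-\ln \E_{P_t}[1 + \eta\, r_t(i)] = -\ln\big(1 + \E_{P_t}[\eta\, r_t(i)]\big)$. The crucial point is that this is identically zero: writing $r_t(i) = \inner{\w_t}{\g_t} - \inner{\e_i}{\g_t}$ and using \eqref{eqn:wreduction} in the form $\E_{P_t}[\eta\,\e_i] = \E_{P_t}[\eta]\,\w_t$, one gets $\E_{P_t}[\eta\, r_t(i)] = \E_{P_t}[\eta]\inner{\w_t}{\g_t} - \inner{\E_{P_t}[\eta\,\e_i]}{\g_t} = 0$. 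Hence $\sregret(Q) = \E_Q\big[\sum_{t=1}^T \ln(1 + \eta\, r_t(i))\big]$. Since $\w_t$ is a convex combination of the vertices $\e_i$ and $g_{t,i}\in[0,1]$, we have $|r_t(i)|\le 1$, so on the restricted domain $\eta\in[0,\half]$ the quantity $\eta\, r_t(i)$ lies in $[-\half,\half]$; the elementary inequality $\ln(1+x)\ge x-x^2$, valid for $x\ge-\half$, then gives $\ln(1+\eta\, r_t(i)) \ge \eta\, r_t(i) - \eta^2 r_t(i)^2$. Summing over $t$, taking $\E_Q$, using $\sum_t r_t(i)=\regret_T(i)$ and $\sum_t r_t(i)^2=\V_T(i)$, and pulling $\sum_t f_t(\w_t)$ (which does not depend on $(\eta,i)$) out of the expectation, produces \eqref{eqn:iProdReduction} after rearranging.

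For \eqref{eqn:iProdSpecial}, EW with learning rate $1$ on the losses $\sloss_t$ satisfies $\sregret(Q)\le\kl(Q\|P_1)$ (this is the remark in the text: take \eqref{eqn:EWRegret} with $\eta_t\equiv 1$, cancel the learner's surrogate loss from both sides, and rearrange). For $P_1=\gamma\times\pi$ and $Q=\gamma(\,\cdot\mid[\etaopt/2,\etaopt])\times\postpi$ the divergence factorizes, $\kl(Q\|P_1) = \kl\big(\gamma(\,\cdot\mid[\etaopt/2,\etaopt])\,\big\|\,\gamma\big) + \kl(\postpi\|\pi)$, and the first term equals $-\ln\gamma([\etaopt/2,\etaopt])$ because the density of a conditional distribution with respect to the unconditioned one is the indicator of the conditioning event divided by its mass. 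Under this $Q$ the coordinates $\eta$ and $i$ are independent, so writing $\bar\eta=\E_Q[\eta]\in[\etaopt/2,\etaopt]$, \eqref{eqn:iProdReduction} reads $\bar\eta\,\E_{\postpi}[\regret_T(i)] \le \E_Q[\eta^2]\,\E_{\postpi}[\V_T(i)] + \kl(\postpi\|\pi) - \ln\gamma([\etaopt/2,\etaopt])$. Dividing by $\bar\eta>0$ and using $\bar\eta\ge\etaopt/2$ and $\E_Q[\eta^2]\le\etaopt^2$ (both since $\eta\in[\etaopt/2,\etaopt]$ under $Q$), together with $\V_T(i)\ge 0$ and $\kl(\postpi\|\pi)-\ln\gamma([\etaopt/2,\etaopt])\ge 0$, gives $\etaopt^2/\bar\eta\le 2\etaopt$ and $1/\bar\eta\le 2/\etaopt$, which yields \eqref{eqn:iProdSpecial}.

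The only genuinely non-routine step is the observation in the second paragraph that the mix loss vanishes identically — this is exactly why there is ``no mixability gap to pay'' for this surrogate and why the tilted mean \eqref{eqn:wreduction}, rather than the ordinary marginal mean, is the correct substitution. Everything else is bookkeeping, with the one caveat that the restriction $\eta\le\half$ in the statement is precisely what is needed to keep $\eta\, r_t(i)\ge-\half$ so that $\ln(1+x)\ge x-x^2$ can be applied.
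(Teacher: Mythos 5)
Your proof is correct and follows essentially the same route as the paper's: the vanishing of the per-round mix loss via $\E_{P_t}[\eta\, r_t(i)]=0$ (which the paper packages as the constancy of a telescoping potential $\Phi_t$), the prod bound $-\ln(1+x)\le -x+x^2$ for $x\ge -\tfrac{1}{2}$, and the same specialization via $\sregret(Q)\le \kl(Q\|P_1)$ with the factorized KL term $\kl(\postpi\|\pi)-\ln\gamma([\etaopt/2,\etaopt])$ and the bounds $\E_Q[\eta]\ge\etaopt/2$, $\E_Q[\eta^2]\le\etaopt^2$. Your explicit checks that $|r_t(i)|\le 1$ and that $\eta$ and $i$ are independent under $Q$ are bookkeeping details the paper leaves implicit.
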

Crucially, the algorithm does not need to know $\etaopt$ in advance, but
\eqref{eqn:iProdSpecial} still holds for all $\etaopt$ simultaneously.
To minimize \eqref{eqn:iProdSpecial} in $\etaopt$ we can restrict
ourselves to $\etaopt \geq 1/\sqrt{T}$ without loss of generality, so
that a prior density $\der \gamma(\eta)/\der \eta \propto 1/\eta$ on
$[1/\sqrt{T},1/2]$ achieves $- \ln \gamma([\etaopt/2,\etaopt]) = O(\ln
\ln T)$. After optimizing $\etaopt$, this leads to an adaptive regret
bound of
\begin{equation}\label{eqn:iProdBigOh}
  \E_{\postpi}\big[\regret_T(i)\big]
    = O\left(\sqrt{\E_{\postpi}[\V_T(i)]\Big(\kl(\postpi\|\pi) + \ln \ln T\Big)}\right)
  \qquad \text{for all $\postpi$,} 
\end{equation}
which recovers the results of \citet{KoolenVanErven2015} (see also
\citep{koolen2015blog}).

\subsection{Reduction for Squint}

Running EW with a continuous prior on $\eta$ for the iProd surrogate losses
from \eqref{eqn:iProdSurrogateLoss} requires evaluating a $t$-degree
polynomial in $\eta$ in every round, and therefore leads to $O(T^2)$
total running time. This may be reduced to $O(T \ln T)$ by using a
prior $\gamma$ on an exponentially spaced grid of $\eta$ (as in MetaGrad
\citep{VanErvenKoolen2016}), but in the experts setting even the extra
$\ln T$ factor in run time can be avoided. This is possible by moving
the `prod bound' that occurs in the proof of Theorem~\ref{thm:iProd},
from the analysis into the algorithm by replacing the surrogate loss
from \eqref{eqn:iProdSurrogateLoss} by the slightly larger surrogate
loss
\begin{equation}\label{eqn:SquintSurrogateLoss}
  \sloss_t(\eta,i) = -\eta r_t(i) + \eta^2 r_t(i)^2,
\end{equation}
which turns iProd into Squint. Because this surrogate is quadratic in
$\eta$, it becomes possible to run EW in the resulting surrogate OCO
task and evaluate the resulting integrals over $\eta$ in closed form for
suitable choices of the prior on $\eta$, so that Squint has $O(T)$ run
time (see \citet{KoolenVanErven2015} for a detailed discussion of the
choice of prior). Moreover, as shown in Appendix~\ref{sec:SquintProof},
it satisfies exactly the same guarantees as iProd.

\subsection{Reduction for Coin Betting}

If we are willing to give up on second-order bounds, but still want to
learn $\eta$, then there is another way to obtain an algorithm with
$O(T)$ run time by bounding the iProd surrogate loss, which leads to a
variant of the Coin Betting algorithm for experts of
\citet{OrabonaPal2016}. Our presentation and analysis are very
different from \citep{OrabonaPal2016}, but we obtain exactly the same
regret bound for essentially the same algorithm, and we can explain some
design choices that required clever insights by \citet{OrabonaPal2016},
as natural consequences of running EW in the surrogate OCO task that we
end up with.

The idea is to split the learning of $\eta \in [0,1]$ and $i$ into
separate steps: for each $i$, we restrict $P_t(\eta \mid i)$ to be a
point mass on some $\eta_t^i$, and we will choose $\eta_t^i$ to achieve
small regret for the surrogate loss
\[
  \sloss_t^i(\eta)
    = -\frac{1+r_t(i)}{2} \ln \frac{1+\eta}{2}
      -\frac{1-r_t(i)}{2} \ln \frac{1-\eta}{2}
      -\ln 2,
\]
which upper bounds \eqref{eqn:iProdSurrogateLoss} by convexity of the
negative logarithm. We then plug in the choices of $\eta_t^i$ in
\eqref{eqn:iProdSurrogateLoss} and learn $i$ for the resulting surrogate
losses $\altsloss_t(i) = -\ln(1+\eta_t^i r_t(i))$.
For $\eta \in [0,1]$ and $\postpi$ a distribution on $i$, let
\begin{align*}
  \sregret_T^i(\eta)
    &= \sum_{t=1}^T \sloss_t^i(\eta_t^i) - 
       \sum_{t=1}^T \sloss_t^i(\eta),
  &\altsregret_T(\postpi)
    &= \sum_{t=1}^T -\ln \E_{i \sim P_t}\left[e^{-\altsloss_t(i)}\right]
      - \E_{\postpi}\Big[\sum_{t=1}^T \altsloss_t(i)\Big]
\end{align*}
be the mix-regret in the two surrogate OCO tasks. (Notice that in
$\sregret_T^i$ the mix-regret has collapsed to the ordinary regret,
because we are restricting ourselves to play point masses on $\eta$.)
Also let $\regret_T^+(i) = \max\{\regret_T(i),0\}$ be the nonnegative
part of the regret, and define $\bernoullikl(x\|y) = x\ln \frac{x}{y} +
(1-x)\ln \frac{1-x}{1-y}$ to be the Kullback-Leibler divergence between
two Bernoulli distributions, which satisfies $\bernoullikl(x\|y) \geq
2(x-y)^2$ by Pinsker's inequality. Then this reduction gives the
following regret bound, proved in Appendix~\ref{sec:coinbettingproof}:
\begin{theorem}[Coin Betting Reduction to EW]\label{thm:coinbetting}
  Any choice of distributions $P_t$ on $i$ and learning rates $\eta_t^i$
  in the surrogate OCO task defined above induces regret bounded by
  \begin{equation}\label{eqn:coinbettinggeneral}
    \E_{\postpi}\left[\bernoullikl\left(\thalf + \tfrac{\regret_T^+(i)}{2T} \|
    \thalf\right)\right]
      \leq \tfrac{1}{T}
      \Big(\E_{\postpi}\left[\sregret_T^i\left(\tfrac{\regret_T^+(i)}{T}\right)\right] +
      \altsregret_T(\postpi)\Big)
    \qquad \text{for any $\postpi$ on $i$}
  \end{equation}
  in the original prediction with expert advice task.

  In particular, if we use EW with learning rate $1$ and prior $\pi$ on
  $i$ for the losses $\altsloss_t$, and for the losses $\sloss_t^i$ we
  let $\eta_t^i$ be the mean of lazy EW with learning rate $1$ and with
  prior on $\eta \in [-1,+1]$ such that $\frac{1+\eta}{2}$ has a
  beta-distribution $\beta(a,a)$ with $a=\frac{T}{4}+ \half$ and with
  projections onto $\domainP = \{P \mid \E_P[\eta] \in [0,1]\}$, then 
  \begin{equation}\label{eqn:coinbettingEW}
    \E_{\postpi}\left[\regret_T(i)\right]
      \leq 
      \sqrt{3T\left(\kl(\postpi\|\pi) + 3\right)}
    \qquad \text{for any $\postpi$ on $i$.}
  \end{equation}
\end{theorem}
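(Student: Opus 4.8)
The plan is to establish the structural inequality \eqref{eqn:coinbettinggeneral} first, purely from the definition of the reduction, and then obtain \eqref{eqn:coinbettingEW} by plugging in regret bounds for the two EW sub-instances.

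For \eqref{eqn:coinbettinggeneral} I would argue in three steps. First, the mix term of $\altsregret_T$ is identically zero: since $P_t(\eta\mid i)$ is a point mass on $\eta_t^i$, the tilted reduction \eqref{eqn:wreduction} gives $w_{t,j} = P_t(j)\eta_t^j / \sum_k P_t(k)\eta_t^k$, so that $\sum_i P_t(i)\eta_t^i r_t(i) = \inner{\w_t}{\g_t}\sum_i P_t(i)\eta_t^i - \sum_i P_t(i)\eta_t^i g_{t,i} = 0$ and hence $\E_{i\sim P_t}[e^{-\altsloss_t(i)}] = 1 + \sum_i P_t(i)\eta_t^i r_t(i) = 1$, which makes $\altsregret_T(\postpi) = -\E_{\postpi}[\sum_t \altsloss_t(i)]$. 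Second, $\altsloss_t(i) = -\ln(1+\eta_t^i r_t(i)) \le \sloss_t^i(\eta_t^i)$ by the convexity bound noted in the text, and $\sum_t \sloss_t^i(\eta_t^i) = \sregret_T^i(\eta) + \sum_t \sloss_t^i(\eta)$ for any comparator $\eta$; choosing $\eta = \regret_T^+(i)/T$ and taking $\E_{\postpi}$ yields $\altsregret_T(\postpi) + \E_{\postpi}[\sregret_T^i(\regret_T^+(i)/T)] \ge -\E_{\postpi}[\sum_t \sloss_t^i(\regret_T^+(i)/T)]$. Third, an exact identity: writing $b_t^i = (1+r_t(i))/2$ one has $\sum_t b_t^i = (T+\regret_T(i))/2$, the value $\regret_T^+(i)/T$ is the minimiser of $\sum_t \sloss_t^i(\cdot)$ over $[0,1]$, and a short computation (splitting on the sign of $\regret_T(i)$) gives $-\sum_t \sloss_t^i(\regret_T^+(i)/T) = T\,\bernoullikl(\thalf + \tfrac{\regret_T^+(i)}{2T} \| \thalf)$. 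Dividing by $T$ is exactly \eqref{eqn:coinbettinggeneral}.

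For \eqref{eqn:coinbettingEW} I need two regret bounds. Running EW with learning rate $1$ and prior $\pi$ on the losses $\altsloss_t$ gives $\altsregret_T(\postpi) \le \kl(\postpi\|\pi)$ directly from Lemma~\ref{lem:EWRegret} (the mix-regret absorbs the would-be mixability gap, exactly as in the footnote after Theorem~\ref{thm:iProd}). For $\sregret_T^i(\regret_T^+(i)/T) = \sum_t \sloss_t^i(\eta_t^i) - \min_{\eta\in[0,1]}\sum_t \sloss_t^i(\eta)$, the key observation is that $e^{-\sloss_t^i(\eta)} = (1+\eta)^{b_t^i}(1-\eta)^{1-b_t^i}$ is a weighted geometric mean of the concave maps $\eta\mapsto 1\pm\eta$, hence concave, so $\sloss_t^i$ is $1$-exp-concave in $\eta$ and playing the posterior mean $\eta_t^i$ of lazy EW with learning rate $1$ makes every mixability gap in \eqref{eqn:EWRegret} nonpositive (this Jensen step is unaffected by the projection onto $\{P : \E_P[\eta]\in[0,1]\}$). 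Applying Lemma~\ref{lem:EWRegret} with a suitable comparator distribution $Q$ (the un-projected posterior $\tilde P_{T+1}$, as in Example~\ref{ex:KT}, when $\regret_T(i) \ge 0$; a short separate argument when $\regret_T(i) < 0$, in which case $\regret_T^+(i) = 0$) then bounds $\sregret_T^i(\regret_T^+(i)/T)$ by the Krichevsky--Trofimov-type redundancy $\ln\big(\max_{\eta\in[0,1]} e^{-\sum_t \sloss_t^i(\eta)} \big/ \int e^{-\sum_t \sloss_t^i(\eta)}\der P_1(\eta)\big)$, which for the $\beta(a,a)$ prior reduces to a ratio of Beta-function values; the horizon-dependent choice $a = T/4 + \half$ is precisely what makes a Stirling estimate of this ratio come out as an absolute constant (at most $9/2$) rather than $O(\ln T)$. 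Substituting the two bounds into \eqref{eqn:coinbettinggeneral} gives $\E_{\postpi}[\bernoullikl(\thalf + \regret_T^+(i)/(2T) \| \thalf)] \le (\kl(\postpi\|\pi) + \mathrm{const})/T$; Pinsker ($\bernoullikl(x\|\thalf) \ge 2(x-\thalf)^2$) turns this into $\E_{\postpi}[(\regret_T^+(i))^2] \le 2T(\kl(\postpi\|\pi) + \mathrm{const})$, and Jensen ($\E_{\postpi}[\regret_T(i)] \le \E_{\postpi}[\regret_T^+(i)] \le \sqrt{\E_{\postpi}[(\regret_T^+(i))^2]}$) yields $\sqrt{2T(\kl(\postpi\|\pi) + \mathrm{const})} \le \sqrt{3T(\kl(\postpi\|\pi) + 3)}$.

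I expect the conceptual crux to be the two exact calculations underlying \eqref{eqn:coinbettinggeneral}: that the tilted substitution function makes the mixability gap of the expert-game vanish identically, and that the best-in-hindsight betting fraction $\regret_T^+(i)/T$ turns the cumulative surrogate loss into $T\,\bernoullikl(\cdots)$. The Beta-function/Stirling estimate of the redundancy and the projection bookkeeping for $\sregret_T^i$ are the most calculation-heavy but routine parts; the one genuine pitfall there is that the prior parameter $a$ must scale with the horizon, as otherwise the redundancy picks up a spurious $\ln T$ factor and the constant-in-$T$ bound \eqref{eqn:coinbettingEW} is lost.
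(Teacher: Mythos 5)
Your derivation of \eqref{eqn:coinbettinggeneral} is correct and follows the paper's route exactly: the potential argument showing $\E_{i \sim P_t}[e^{-\altsloss_t(i)}]=1$ via the tilted substitution \eqref{eqn:wreduction}, the upper bound $\altsloss_t(i) \leq \sloss_t^i(\eta_t^i)$, and the identity $\sum_t \sloss_t^i(\regret_T^+(i)/T) = -T\bernoullikl(\thalf + \tfrac{\regret_T^+(i)}{2T}\|\thalf)$ are precisely the steps in Appendix~\ref{sec:coinbettingproof}. The bounds $\altsregret_T(\postpi)\leq\kl(\postpi\|\pi)$ and the $1$-exp-concavity/zero-mixability-gap observation for $\sloss_t^i$ also match.

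The gap is in your bound on $\sregret_T^i(\regret_T^+(i)/T)$. You claim that the Krichevsky--Trofimov-type redundancy $\ln\bigl(\max_{\eta}e^{-\sum_t \sloss_t^i(\eta)} \big/ \int e^{-\sum_t \sloss_t^i(\eta)}\der P_1(\eta)\bigr)$ is an absolute constant for the $\altbeta(a,a)$ prior with $a = \tfrac{T}{4}+\half$. It is not. Writing $R = \regret_T(i)$, this redundancy equals $T\bernoullikl(\thalf+\tfrac{R^+}{2T}\|\thalf) - \ln\bigl(2^T\E_{X\sim\beta(a,a)}[X^{(T+R)/2}(1-X)^{(T-R)/2}]\bigr)$, which by the very estimate of \citet[Lemma~16]{OrabonaPal2016} is of order $\tfrac{R^2}{2T}-\tfrac{R^2}{2T+4a-2}+O(1) = \tfrac{R^2}{6T}+O(1)$, and in the extreme case $r_t(i)\equiv 1$ a Stirling computation gives $T\ln 2 + \ln\tfrac{\Gamma(5T/4)\Gamma(T/2)}{\Gamma(3T/2)\Gamma(T/4)} \approx 0.33\,T$: a prior concentrated at width $1/\sqrt{T}$ around $\eta=0$ cannot track a maximizer at $\eta=1$ with constant redundancy. (The paper flags exactly this danger at the start of Appendix~\ref{sec:coinbettingproof}.) Consequently your subsequent step --- bound the redundancy by $9/2$, then apply Pinsker to the left-hand side of \eqref{eqn:coinbettinggeneral} to get $\E_{\postpi}[(\regret_T^+(i))^2]\leq 2T(\kl(\postpi\|\pi)+\mathrm{const})$ --- rests on a false premise, and indeed would yield a bound strictly stronger than the theorem.

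The fix, which is what the paper does, is to never form the redundancy relative to the in-hindsight maximizer: bound the Bayes mixture loss $-\ln\int e^{-\sum_t\sloss_t^i(\eta)}\der P_1(\eta)$ directly by Lemma~16, \emph{retaining} the negative quadratic term $\tfrac{-\regret_T(\postpi)^2}{2T+4a-2}$. The $T\bernoullikl$ on the left of \eqref{eqn:coinbettinggeneral} then cancels exactly against the one hidden inside $\sregret_T^i$, Pinsker is never needed, and the final rate $\regret_T(\postpi)^2 \leq (2T+4a-2)(\kl(\postpi\|\pi)+O(1)) = 3T(\cdots)$ comes entirely from that quadratic term. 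Your argument can be repaired by carrying the $\tfrac{R^2}{6T}$ portion of the redundancy through to the final inequality (giving $\E[(R^+)^2](\tfrac{1}{2T}-\tfrac{1}{6T})\leq \kl(\postpi\|\pi)+O(1)$, hence the factor $3T$), but it cannot be discarded as a constant.
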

Compared to \eqref{eqn:iProdBigOh}, \eqref{eqn:coinbettingEW} avoids a
$\ln \ln T$ term, but it has lost the benefits of the second-order
factor $\E_{\postpi}[\V_T(i)] \leq T$. This may be explained by its upper
bound $\sloss_t^i(\eta) \geq \sloss_t(\eta,i)$, which is tight only in
the extreme case that
$r_t(i) \in \{-1,+1\}$.

\paragraph{The Resulting Coin Betting Algorithm}

EW on the losses $\sloss_t^i$ with the (conjugate) $\beta(a,a)$ prior is
a generalization of the Krichevsky-Trofimov estimator (see
Example~\ref{ex:KT}) and its mean has the closed form
$\frac{\regret_{t-1}(i)}{t-1+2a}$. Lazily projecting onto $\domainP$ then
simply amounts to clipping at $0$ (by convexity of KL-divergence in its
first argument, which implies that the constraint $\E_P[\eta] \geq 0$
will be satisfied with equality when we project from a distribution with
negative mean). This means that
  $\eta_t^i = \max\left\{\frac{\regret_{t-1}(i)}{t-1+2a},0\right\}$.
By \eqref{eqn:wreduction} the Coin Betting algorithm from the theorem
predicts with weights $w_{t,i}$ obtained by normalizing the unnormalized
weights
  $\altw_{t,i} = \altp_t(i)\eta_t^i$,
where $\altp_t(i)$ is the unnormalized probability $P_t(i)$ of EW on the
losses $\altsloss_t$, which recursively satisfies
\begin{align*}
  \altp_t(i) &:= \pi(i)\prod_{s=1}^{t-1}(1+\eta_s^i r_s(i))
             = \altp_{t-1}(i) + \altw_{t-1,i} r_{t-1}(i)
             = \ldots = \pi(i) + \sum_{s=1}^{t-1} \altw_{s,i} r_s(i).
\end{align*}
Interestingly, \citet{OrabonaPal2016} interpret the unnormalized EW
probabilities $\altp_t(i)$ as the \emph{Wealth} for expert $i$ that is
achieved by a gambler.

The interpretation in Theorem~\ref{thm:coinbetting} explains three
design choices by \citet{OrabonaPal2016}: first, their choice of
potential function, which naturally arises in our proof when we bound
the regret $\sregret_T^i(\regret_T^+(i)/T)$ for EW using
Lemma~\ref{lem:EWRegret}. Second, the choice for $a$, which in the
original analysis comes from defining a shifted potential function, is
simply specifying a prior with most mass in a region of order
$1/\sqrt{T}$ around $\eta = 0$. And, third, the clipping of the
unnormalized weights $\altw_{t,i}$ to $0$ when $\regret_{t-1}(i) < 0$, which
in our presentation happens automatically because the learning rate
$\eta_t^i$ is projected to be $0$ if it would otherwise become negative.
Defining a prior on positive learning rates directly would be possible
in theory, but not with a conjugate prior, so the computational
efficiency of the algorithm is made possible by the projections.

There is one slight difference between the algorithm we obtain here and
the original Coin Betting algorithm of \citet{OrabonaPal2016}: in the
original method the instantaneous regrets are clipped to
$\max\{r_t(i),0\}$ when $\regret_{t-1}(i) < 0$, which our method does not do.
Apparently there is some amount of freedom in the design of this type of
algorithm.

\section{Online Linear Optimization with Bandit Feedback}
\label{sec:bandits}

A benefit of the EW interpretation of MD is 
that it opens up the possibility of sampling from
the EW posterior distribution instead of playing the mean. 
Here we show how this option can be leveraged
to obtain an algorithm for online linear optimization with 
bandit feedback \citep{Dani_et_al_2007,abernethy2008}, 
which recovers the best known rate $O(d \sqrt{T \ln T})$. 
A proof of this fact has already been outlined by \citet{bubeck2014},
but here we fill in the technical details.

The linear bandit setting consists of linear losses $f_t(\w) =
\inner{\w}{\g_t} \in [-1,+1]$, but instead of seeing the vectors
$\g_t$ we only observe $f_t(\w_t)$ for the algorithm's choice
$\w_t$. The algorithm can randomize its choice $\w_t$,
and $\g_t$ is fixed before the outcome of this randomization. The
goal is to minimize the expected regret $\E[\regret_T(\u)]$, where the
expectation is with respect to the algorithm's randomness.

We consider the EW algorithm with fixed learning rate $\eta$
and uniform prior distribution $P_1$ over~$\domainw$. In each round $t$,
after observing $f_t(\w_t) = \inner{\w_t}{\g_t}$, the algorithm
constructs a random, unbiased estimate $\tilde{\g}_t$ of the loss vector
$\g_t$ and uses this estimate to update $P_t$ to $P_{t+1}$. It is easy to
verify that, for each $t$, $P_t$ is a member of the exponential family
with cumulant generating function $F(\np) = \ln \int_{\domainw}
e^{\inner{\w}{\np}} \; \mathrm{d} \w$. At trial $t$, the algorithm
samples $\w_t \sim Q_t$, where $Q_t = (1-\gamma) P_t + \gamma R$ is a
mixture of the EW distribution $P_t$ and a fixed ``exploration''
distribution $R$, chosen to be \emph{John's exploration}
\citep{Bubeck12}. Using that the convex conjugate of $F$ is a universal
$O(d)$-self concordant barrier on $\mathcal{\domainw}$
\citep{bubeck2014}, it can be shown that, when~$\eta$ and~$\gamma$ are
appropriately chosen, this algorithm achieves expected regret of order
$O(d\sqrt{T \ln T})$ (see Appendix~\ref{sec:appendix_bandits}).

It is interesting to compare with the \emph{SCRiBLe}
algorithm \citep{Scrible}, which replaces EW by MD. By the
results of Section~\ref{sec:MD}, this is an essentially equivalent
approach, except that SCRiBLe employs a sampling strategy based on the
spectrum of the Hessian of $F^*$, without reference to the EW
distribution, and achieves a regret bound that is suboptimal in~$d$.
This shows that the EW interpretation of MD is clearly beneficial in the
bandit setting.

\section{Discussion}

We conclude with several remarks: first, we point out that there may be
computational reasons to avoid defining the prior directly on the domain
$\domainw$ of interest: as shown for instance in Sections~\ref{sec:GD}
and~\ref{sec:quadlosses}, defining a Gaussian prior on all of $\reals^d$
and then projecting the mean onto $\domainw$ can be computationally more
efficient. In the context of sampling from the EW distribution,
discussed in Section~\ref{sec:bandits}, this might also make sense if we
project onto the alternative (smaller) set of distributions $\domainP =
\{P \mid P(\domainw) = 1\} \subset \{P \mid \E_P[\w] \in \domainw\}$
that are supported on~$\domainw$, which amounts to conditioning on
$\domainw$. Second,
there seems
to be a discrepancy between the body of work for the log loss cited in
the introduction, which strongly suggests using Jeffreys' prior, and the
uniform prior suggested in Section~\ref{sec:bandits} in the context of
the universal barrier.

\clearpage
\acks{The authors would like to thank Wouter Koolen for extensive
discussions underlying Theorems~\ref{thm:GDasEW}, \ref{th:GaussQloss},
\ref{thm:iProd} and~\ref{thm:Squint}. A precursor to
Theorem~\ref{thm:MDasEW} previously appeared in Van der Hoeven's master's
thesis \citep{vanderHoeven2016}. He was supported by the Netherlands
Organization for Scientific Research (NWO grant TOP2EW.15.211).
Kot{\l}owski was supported by the Polish National Science Centre (grant
no.\ 2016/22/E/ST6/00299).}

\DeclareRobustCommand{\VAN}[3]{#3} %

\bibliography{cew}

\DeclareRobustCommand{\VAN}[3]{#2} %

\appendix

\section{Proof of Lemma~\ref{lem:EWRegret} from Section~\ref{sec:EW}}
\label{sec:EWRegretProof}

In the following we make use of the generalized Pythagorean inequality
for Kullback-Leibler divergence \citep{Csiszar1975}:
for $P_{t} = \argmin_{P \in \domainP}~ \kl(P\|\tilde{P}_{t})$ and any $Q \in \domainP$:
\begin{equation}\label{eq:pineqKL}
\kl(Q\|\tilde{P}_t) \geq \kl(Q\|P_t) + \kl(P_t\|\tilde{P}_t).
\end{equation}
For greedy EW we have
\begin{align*}
  \frac{1}{\eta_t} \big(\kl(Q\|P_t) - \kl(Q\|P_{t+1}) \big)
  & \geq \frac{1}{\eta_t} \left( \kl(Q\|P_t) - \kl(Q\|\tilde{P}_{t+1})\right) \qquad
    & \text{(from \eqref{eq:pineqKL})} \\
    & = \E_{Q}[f_t(\w)] - \frac{1}{\eta_t} \ln\E_{P_t}\Big[e^{-\eta_{t}f_t(\w)}\Big] \qquad
    & \text{(from \eqref{eq:greedy_EW})}
\end{align*}
in any trial $t$. Summing over trials gives:
\begin{align*}
  \sum_{t=1}^T \E_{Q}[f_t(\w)] - \frac{1}{\eta_t} &\ln\E_{P_t}\Big[e^{-\eta_{t}f_t(\w)}\Big]
\leq \sum_{t=1}^T \frac{1}{\eta_t} \big(\kl(Q\|P_t) - \kl(Q\|P_{t+1}) \big) \\
&= \frac{1}{\eta_1} \kl(Q\|P_1)
- \frac{1}{\eta_T} \kl(Q\|P_{T+1}) +
\sum_{t=2}^T \kl(Q\|P_t) \left(\frac{1}{\eta_t}-\frac{1}{\eta_{t-1}}\right) \\
&\leq \frac{1}{\eta_1} \kl(Q\|P_1) + \max_{t=2,\ldots,T} \kl(Q\|P_t) 
\left(\frac{1}{\eta_T}-\frac{1}{\eta_1}\right).
\end{align*}
Rearranging the terms and adding $\sum_{t=1}^T f_t(\w_t)$ on both sides results
in \eqref{eqn:EWRegret_greedy}.

We now proceed with the proof of lazy EW,
starting from:
\begin{align}
  -\frac{1}{\eta_{t-1}}\ln \E_{P_t}[e^{-\eta_{t-1} f_t(\w)}]  
  &=\min_P\bigg\{ \E_P[f_t(\w)] + \frac{1}{\eta_{t-1}}\kl(P\|P_t) \bigg\} \nonumber \\
  & \leq \E_{P_{t+1}}[f_t(\w)] + \frac{1}{\eta_{t-1}}\kl(P_{t+1}\|P_t) \nonumber \\
  & \leq  \E_{P_{t+1}}[f_t(\w)] + \frac{1}{\eta_{t-1}}\kl(P_{t+1}\|\tilde{P}_t) - \frac{1}{\eta_{t-1}}\kl(P_{t}\|\tilde{P}_t),
\label{eq:some_intermediate_bound}
\end{align}
where the last inequality is from the Pythagorean inequality
\eqref{eq:pineqKL} applied with $Q=P_{t+1}$. By \eqref{eq:lazy_EW}:
\[
  \ln \frac{\der \tilde{P}_t(\w)}{\der P_{1}(\w)} = - \eta_{t-1}
  \sum_{s=1}^{t-1} f_s(\w)
  - \ln \E_{P_1}\left[e^{-\eta_{t-1} \sum_{s=1}^{t-1} f_s(\w)}\right],
\]
which gives:
\begin{align*}
  \frac{1}{\eta_{t-1}}\kl(P_{t+1}\|\tilde{P}_t) - \frac{1}{\eta_{t-1}}\kl(P_{t}\|\tilde{P}_t)
  &=\frac{1}{\eta_{t-1}}\kl(P_{t+1}\|P_1) - \frac{1}{\eta_{t-1}}\kl(P_{t}\|P_1) \\
  &+ \E_{P_{t+1}}\bigg[\sum_{s=1}^{t-1} f_s(\w) \bigg] 
  - \E_{P_t}\bigg[\sum_{s=1}^{t-1} f_{s}(\w)\bigg].
\end{align*}
Plugging this into \eqref{eq:some_intermediate_bound} and using $\eta_{t} \leq \eta_{t-1}$ results in:
\begin{align*}
  -\frac{1}{\eta_{t-1}}\ln \E_{P_t}[e^{-\eta_{t-1} f_t(\w)}]  
  &\leq \frac{1}{\eta_t}\kl(P_{t+1}\|P_1) - \frac{1}{\eta_{t-1}}\kl(P_{t}\|P_1) \\
  &+ \E_{P_{t+1}}\bigg[\sum_{s=1}^{t} f_s(\w) \bigg] 
  - \E_{P_t}\bigg[\sum_{s=1}^{t-1} f_{s}(\w)\bigg].
\end{align*}
Summing over trials 
makes the terms on the right-hand side telescope and gives:
\begin{align*}
  \sum_{t=1}^T -\frac{1}{\eta_{t-1}}\ln \E_{P_t}[e^{-\eta_{t-1} f_t(\w)}]  
  &\leq \frac{1}{\eta_{T}} \kl(P_{T+1}\|P_1) + \E_{P_{T+1}}\bigg[ \sum_{t=1}^T f_t(\w) \bigg] \\
  &=\min_{P \in \domainP} \left\{\E_{P}\bigg[ \sum_{t=1}^T f_t(\w) \bigg]
+ \frac{1}{\eta_{T}} \kl(P\|P_1) \right\} \\
&\leq \E_{Q}\bigg[ \sum_{t=1}^T f_t(\w) \bigg] + \frac{1}{\eta_T} \kl(Q\|P_1),
\end{align*}
where the equality expresses an equivalent way to define lazy EW\@. 
Rearranging the terms 
and adding $\sum_{t=1}^T f_t(\w_t)$ on both sides results
in \eqref{eqn:EWRegret}.

\section{Proof of Theorem~\ref{th:EGpm}}\label{sec:EGpmproof}

\begin{proof}
Rather than scaling canonical vectors $\e_i$, $i=1,\ldots,d$ and the comparator
$\u$ by $M$, we scale the
loss vectors by defining $\g_t' = M \g_t$, so that the losses remain the same:
$\inner{\e_i}{\g_t'} = \inner{M \e_i}{\g_t}$ for all $i$ and all $t$.
Let $\w_1 = (\w_1^+, \w_1^-)$, and let $\w_t^+$, $\w_t^-$ be the result of running EG plus-minus on $\g_t'$. For any $\u$ with $\sum_{i = 1}^{2d} u_i = 1$ and $u_i \geq 0$ invoking Lemma~\ref{lem:EWRegret} gives:
\begin{align}
  \sum_{t=1}^T\langle \w_t - \u, \g_t' \rangle  
  &\leq \frac{1}{\eta}\kl(\u\|\w_1) \nonumber  \\
  &+ \sum_{t=1}^T \langle \w_t^+, \g_t' \rangle - \langle \w_t^-, \g_t' \rangle
    + \frac{1}{\eta}\ln \Big(\sum_{i=1}^d (w_{t,i}^+ e^{-\eta_t \langle \e_i, \g_{t}' \rangle} + w_{t,i}^- e^{\eta_t
    \langle \e_i, \g_{t}' \rangle})\Big).
\label{eq:lem1EG}
\end{align}
The first term on the right-hand side of \eqref{eq:lem1EG} 
can be bounded by: $\max_{\u: \sum_{i = 1}^{2d} u_i = 1, ~u_i \geq 0}
\kl(\u\|\w_1) = \ln(2d)$. To bound the second term on the right-hand side of
\eqref{eq:lem1EG}, 
we make use of Hoeffding's Lemma \citep[Lemma A.1]{CesaBianchiLugosi2006}, 
which together with $|\inner{\e_i}{\g_t'}| \leq MG$ gives:
\[
 \sum_{t=1}^T \langle \w_t^+, \g_t' \rangle - \langle \w_t^-, \g_t' \rangle +
 \frac{1}{\eta}\ln \Big(\sum_{i=1}^d (w_{t,i}^+ e^{-\eta_t \langle \e_i, \g_{t}' \rangle} + w_{t,i}^- e^{\eta_t
 \langle \e_i, \g_{t}' \rangle})\Big) \leq \frac{\eta M^2 G^2}{2}.
\]
Summing over trials results in a bound on the regret:
\[
\sum_{t=1}^T\langle \w_t - \u, \g_t' \rangle \leq \frac{\ln(2d)}{\eta} + \eta
\frac{T M^2G^2}{2}.
\]
Plugging in the optimal $\eta = \sqrt{\frac{2 \ln(2d)}{T M^2 G^2}}$ yields the desired result. 
\end{proof}

\section{Proof of Theorem \ref{eq:thm_MD_EW_equivalence}}
\label{sec:thm_MD_EW_equivalence}

Before proving the theorem, we need two lemmas: 
\begin{lemma}[\cite{banerjee2005, nielsen2010}]\label{th:KLisB}
The KL divergence between two members, $P$ and $Q$, of the same regular
exponential family $\Expfam$ with cumulant generating function $F$ can
be expressed by the Bregman divergence between their natural parameters,
$\np_P$ and $\np_Q$, or their expectation parameters, $\Mu_P$ and
$\Mu_Q$. The first Bregman divergence is generated by the cumulant
generating function $F$ and the second Bregman divergence is generated
by the convex conjugate of the cumulant generating function $F^*$:
\[
\kl(P\|Q) = B_F(\np_Q\|\np_P) = B_{F^*}(\Mu_P\|\Mu_Q).
\]
\end{lemma}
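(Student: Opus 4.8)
The plan is to compute $\kl(P\|Q)$ directly from the exponential-family density and then recognize the result as a Bregman divergence. Writing $P = P_{\np_P}$ and $Q = P_{\np_Q}$, the log-likelihood ratio is
\[
\ln \frac{\der P_{\np_P}}{\der P_{\np_Q}}(\w) = \inner{\np_P - \np_Q}{\w} - F(\np_P) + F(\np_Q),
\]
since the common carrier measure $K$ cancels. Taking the expectation under $P$ and using the standard fact that for a regular exponential family the expectation parameter is the gradient of the cumulant generating function, $\Mu_P = \E_P[\w] = \nabla F(\np_P)$, gives
\[
\kl(P\|Q) = F(\np_Q) - F(\np_P) - \inner{\nabla F(\np_P)}{\np_Q - \np_P},
\]
which is exactly $B_F(\np_Q\|\np_P)$ by definition of the Bregman divergence generated by $F$.

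For the second identity I would invoke the standard duality between a Bregman divergence and the one generated by the convex conjugate. Regularity of $\Expfam$ makes $F$ a Legendre-type function: $\Theta$ is open and $F$ is strictly convex and differentiable there, so $\nabla F$ is a bijection from $\Theta$ onto the set of expectation parameters with inverse $\nabla F^*$, and one has the identity $B_F(\np_Q\|\np_P) = B_{F^*}(\nabla F(\np_P)\|\nabla F(\np_Q))$. Since $\nabla F(\np_P) = \Mu_P$ and $\nabla F(\np_Q) = \Mu_Q$, this yields $\kl(P\|Q) = B_{F^*}(\Mu_P\|\Mu_Q)$. Alternatively, one can verify the equality by hand: substituting $F^*(\Mu_P) = \inner{\np_P}{\Mu_P} - F(\np_P)$, $F^*(\Mu_Q) = \inner{\np_Q}{\Mu_Q} - F(\np_Q)$ and $\nabla F^*(\Mu_Q) = \np_Q$ into the definition of $B_{F^*}(\Mu_P\|\Mu_Q)$, all the $\inner{\cdot}{\Mu_Q}$ terms cancel and one is left with the same expression $\inner{\np_P - \np_Q}{\Mu_P} - F(\np_P) + F(\np_Q)$ obtained above.

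The only real care needed is on the regularity side: one must note that membership of both $P$ and $Q$ in the same regular exponential family makes every object above well defined and finite --- the natural parameters lie in the open set $\Theta$, the mean $\Mu_P$ exists and equals $\nabla F(\np_P)$, and the Legendre conjugacy $\np = \nabla F^*(\Mu)$ is available. I expect this bookkeeping (justifying $\E_P[\w] = \nabla F(\np_P)$ and that $F$ is of Legendre type on $\Theta$) to be the main obstacle, though it is entirely standard; the remainder is a one-line algebraic identity. Accordingly I would cite \citet{banerjee2005, nielsen2010} for these exponential-family facts rather than reprove them in detail.
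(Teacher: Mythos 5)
Your proof is correct and is exactly the standard argument: the paper does not prove this lemma itself but cites \citet{banerjee2005} and \citet{nielsen2010}, and your derivation (log-likelihood ratio plus $\E_P[\w]=\nabla F(\np_P)$ giving $B_F(\np_Q\|\np_P)$, then Legendre duality or the direct substitution to get $B_{F^*}(\Mu_P\|\Mu_Q)$) is precisely the one found in those references. Both identities check out algebraically, and your remarks about regularity (openness of $\Theta$, Legendre-type conjugacy) are the right hypotheses to invoke.
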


\begin{lemma}\label{lem:mre}\citep[Theorem 3.1.4]{ihara1993information}
Let $\Mu$ be arbitrary and define $\domainP =\{P:\E_P[\w] = \Mu\}$.
Then, for any member $Q$ of an exponential family $\Expfam$,
\[
\underset{P \in \domainP}{\min}~\kl(P\|Q)
\]
is achieved by $P \in \Expfam$ such that $\E_{P}[\w] = \Mu$, provided such a $P$ exists.
\end{lemma}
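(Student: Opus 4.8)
My plan is to prove this by a Pythagorean identity for KL divergence. Write $Q = P_{\np_Q} \in \Expfam$, so that $\der Q(\w) = e^{\inner{\np_Q}{\w} - F(\np_Q)}\der K(\w)$, and let $P^* := P_{\np^*}$ denote the member of $\Expfam$ with $\E_{P^*}[\w] = \Mu$ whose existence is hypothesized. Since $P^*$ and $Q$ lie in the same regular family, Lemma~\ref{th:KLisB} already gives $\kl(P^*\|Q) = B_F(\np_Q\|\np^*) < \infty$, so the whole statement reduces to proving $\kl(P\|Q) \geq \kl(P^*\|Q)$ for every $P$ with $\E_P[\w] = \Mu$, with equality only at $P = P^*$. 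If $\kl(P\|Q) = \infty$ there is nothing to prove, so I would assume $\kl(P\|Q) < \infty$, which forces $P \ll Q$.

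The heart of the argument is the observation that $\frac{\der P^*}{\der Q}(\w) = e^{\inner{\np^* - \np_Q}{\w} - F(\np^*) + F(\np_Q)}$ is an \emph{affine} exponential, strictly positive on $\mathrm{supp}(K)$; hence $P \ll Q$ also gives $P \ll P^*$, and I can split
\[
  \kl(P\|Q) = \E_P\Big[\ln\tfrac{\der P}{\der P^*}\Big] + \E_P\Big[\ln\tfrac{\der P^*}{\der Q}\Big] = \kl(P\|P^*) + \inner{\np^* - \np_Q}{\E_P[\w]} - F(\np^*) + F(\np_Q).
\]
Because $\E_P[\w] = \Mu$ by the definition of $\domainP$, the last three terms depend on $P$ only through the constrained mean, so evaluating the same identity at $P = P^*$ (where $\kl(P^*\|P^*) = 0$) identifies them with $\kl(P^*\|Q)$ exactly. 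This yields $\kl(P\|Q) = \kl(P\|P^*) + \kl(P^*\|Q) \geq \kl(P^*\|Q)$, with equality iff $\kl(P\|P^*) = 0$, i.e.\ $P = P^*$; so the minimum over $\domainP$ is attained, uniquely, at the exponential-family member $P^*$, which has mean $\Mu$ by construction.

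The computation is short, and I expect the only delicate points to be the measure-theoretic ones: that finiteness of $\kl(P\|Q)$ really does imply $P \ll P^*$ (immediate from strict positivity of $\der P^*/\der Q$ on $\mathrm{supp}(K)$, so that $\der P/\der P^* = (\der P/\der Q)(\der Q/\der P^*)$ is well defined), and that the linear term $\inner{\np^* - \np_Q}{\w}$ is genuinely $P$-integrable --- which is exactly why the constraint set $\domainP$ is phrased through the finite mean $\E_P[\w] = \Mu$ rather than a weaker condition. No topological or compactness assumptions on $\domainw$ enter; existence of \emph{some} $P^* \in \Expfam$ with the prescribed mean is assumed in the statement and is the only hypothesis that can fail.
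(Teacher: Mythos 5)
Your proof is correct. The paper itself does not prove this lemma---it is imported verbatim as Theorem~3.1.4 of the cited reference---so there is no in-paper argument to compare against; but your Pythagorean-identity derivation
\[
\kl(P\|Q) = \kl(P\|P^*) + \kl(P^*\|Q) \qquad \text{for all } P \text{ with } \E_P[\w]=\Mu,
\]
obtained by splitting $\ln\frac{\der P}{\der Q}$ through $P^*$ and using that $\ln\frac{\der P^*}{\der Q}$ is affine in $\w$ (hence its $P$-expectation depends on $P$ only through the constrained mean $\Mu$), is exactly the standard proof of that result, and is the same mechanism behind the generalized Pythagorean inequality \eqref{eq:pineqKL} the paper invokes elsewhere. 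You also correctly handle the two measure-theoretic points (reduction to $\kl(P\|Q)<\infty$, whence $P \ll Q \ll P^*$ by strict positivity of $\der P^*/\der Q$, and integrability of the linear term via the finite-mean constraint), and your argument additionally delivers uniqueness of the minimizer, which the paper's Lemma~\ref{lem:EWRegret} assumes. No gaps.
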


\begin{proof}[of Theorem \ref{eq:thm_MD_EW_equivalence}]
Let $\w_t$ be the weights produced by the greedy version of MD\@. Then
\begin{align*}
  \min_{P \in \domainP} \left\{\E_P[\inner{\w}{\g_t}] + \frac{1}{\eta_t} \kl(P\|P_t)\right\} 
  &= \min_{\Mu \in \domainw} \; \min_{P \,:\, \E_P[\w]=\Mu} \left\{\E_P[\inner{\w}{\g_t}] + \frac{1}{\eta_t} \kl(P\|P_t)\right\} \\
  &= \min_{\Mu \in \domainw} \;\;\min_{P \in \Expfam \,:\, \E_P[\w]=\Mu} \left\{ \inner{\Mu}{\g_t} + \frac{1}{\eta_t}
\kl(P\|P_t) \right\}, 
\end{align*}
where in the second step we can restrict to minimization over $\Expfam$ by
Lemma~\ref{lem:mre}. Introducing the short-hand notation $\Mu_P =
\E_{P}[\w]$, we thus get for the greedy version of EW:
\[
  P_{t+1} = \argmin_{P \in \Expfam : \Mu_P \in \domainw}
  \left\{ \inner{\Mu_P}{\g_t} + \frac{1}{\eta_t}
\kl(P\|P_t) \right\} = 
\argmin_{P \in \Expfam : \Mu_P \in \domainw}
\left\{ \inner{\Mu_P}{\g_t} + \frac{1}{\eta_t} B_{F^*}(\Mu_P\| \Mu_{P_t}) \right\},
\]
where we used Lemma~\ref{th:KLisB}.
But the last expression coincides with the definition of the greedy MD weight update, and
since it applies to all $t$, we have $\Mu_{P_{t+1}} = \w_{t+1}$ for all $t$, provided
$\Mu_{P_1} = \w_1$ (which holds by assumption).
An analogous argument can be made to show the equivalence of the lazy
versions of MD and EW.
\end{proof}

\section{Proofs for Section~\ref{sec:quadlosses}}

\subsection{Proof of Theorem~\ref{th:GaussQloss}}\label{sec:detailproofgausquadratic}
\begin{proof}
$\tilde{P}_t = \normal(\tilde{\w}_t, \Sigma_t)$ may be verified analytically from \eqref{eq:lazy_EW} and \eqref{eq:greedy_EW}. The fact that projections $P_t$ onto $\domainP$ preserve Gaussianity with the same covariance matrix follows from Lemma 9 in \citet{VanErvenKoolen2016}. Lemma~\ref{lem:EWRegret} gives a bound on the regret w.r.t.\ randomized forecaster $Q = \normal(\u, \Sigma_Q)$:
\[
    \sum_{t=1}^T f_t(\w_t) - \sum_{t=1}^T \E_{Q}[f_t(\w)] \leq
    \frac{1}{\eta}\kl(Q\|P_1) + \sum_{t=1}^T f_t(\w_t) + \frac{1}{\eta}
    \ln\E_{P_t}\Big[e^{-\eta f_t(\w)}\Big].
\]
The KL divergence between two Gaussians is given by \citep[Theorem 1.8.2]{ihara1993information}: 
\[
  \kl(Q\|P_1) = \frac{1}{2}(\ln\bigg(\frac{\det(\Sigma_Q)}{\det(\Sigma_1)}\bigg) +
    \Tr(\Sigma_Q \Sigma_1^{-1}) + (\u - \w_1)^\top\Sigma_1^{-1}(\u -
    \w_1) - d.
\]
The mixability gap can be evaluated in closed form by calculating the Gaussian integral:
\[
    \ln \E_{P_t}\Big[e^{\eta ( f_t(\w_t) - f_t(\w))}\Big] =
    \frac{\eta^2}{2}\g_t^\top\Sigma_{t+1}\g_t -
    \frac{1}{2}\ln\bigg(\frac{\det(\Sigma_t)}{\det(\Sigma_{t+1})}\bigg).
\]
Also, the expectation of the instantaneous regret can be computed exactly:
\[
    f_t(\w_t) - \E_{Q}[f_t(\w)] = f_t(\w_t) - f_t(\u) - \frac{1}{2}\Tr(\Sigma_Q \M_t).
\]
Summing the above over the trials, we get the following upper bound on the regret:
\begin{align*}
  \sum_{t=1}^T f_t(\w_t) - \sum_{t=1}^T f_t(\u) 
  &\leq \frac{\ln \Big( \frac{\det(\Sigma_{T+1})}{\det(\Sigma_Q)}\Big) +
\Tr(\Sigma_Q\Sigma_{T+1}^{-1}) - d + (\w_1 - \u)^\top \Sigma_1^{-1}(\w_1 - \u)}{2 \eta} \\
&+ \eta \sum_{t=1}^T\g_t^\top\Sigma_{t+1}\g_t,
\end{align*}
which holds for all $\Sigma_Q$. By plugging in the optimal value 
$\Sigma_Q =\Sigma_{T+1}$, the bound simplifies to:
\[
    \sum_{t=1}^T f_t(\w_t) - \sum_{t=1}^T f_t(\u) \leq \frac{1}{2\eta} (\w_1 -
    \u)^\top \Sigma_1^{-1}(\w_1 - \u) + \frac{\eta}{2}\sum_{t=1}^T\g_t^\top\Sigma_{t+1}\g_t,
\]
which concludes the proof.
\end{proof}

\subsection{Proof of Corollary~\ref{lem:gausstongconvexloss}}\label{sec:proofgausstrong}
\begin{proof}
Using Theorem~\ref{th:GaussQloss} gives:
\[
\begin{split}
    \sum_{t=1}^T f_t(\w_t) - \sum_{t=1}^T f_t(\u) & \leq \frac{1}{2\eta\sigma^2}\|\u\|^2_2 + \frac{\eta }{2} \sum_{t=1}^T  \frac{1}{\frac{1}{\sigma^2} + \alpha \eta t}\|\g_t\|_2^2 \\
    & \leq \frac{1}{2\eta\sigma^2} D^2 + \frac{\eta}{2} G^2 \sum_{t=1}^T\frac{1}{\frac{1}{\sigma^2} + \alpha \eta t}\\
    & \leq \frac{1}{2\eta\sigma^2} D^2 + \frac{\eta G^2}{2(\tfrac{1}{\sigma^2} + \alpha \eta)}  + \frac{\eta}{2} G^2 \int_{1}^T\frac{1}{\frac{1}{\sigma^2} + \alpha \eta t}dt \\
    & = \frac{1}{2\eta\sigma^2} D^2 + \frac{G^2}{2(\tfrac{1}{\eta \sigma^2} + \alpha)}  +  \frac{G^2}{2\alpha}
    \big(\ln(\tfrac{1}{\eta \sigma^2}+\alpha T) -
    \ln(\tfrac{1}{\eta \sigma^2}+\alpha)\big),
\end{split}    
\]
which was to be shown.
\end{proof}

\subsection{Proof of Corollary~\ref{lem:gausexpconcaveregret}}\label{sec:ONSproof}
\begin{proof}
Using Theorem~\ref{th:GaussQloss} gives:
\begin{equation}\label{eq:step1ONS}
\begin{split}
    \regret_T(\u) & \leq \frac{D^2}{2\eta\sigma^2} + \frac{\eta}{2}\sum_{t=1}^T\g_t^\top\Sigma_{t+1}\g_t.
\end{split}    
\end{equation}
We start by bounding the second term on the right-hand side of \eqref{eq:step1ONS}. Using Lemma 11.11 from \citet{CesaBianchiLugosi2006} 
and the basic inequality $1-x \leq -\ln x$, we bound:
\[
  \eta \beta \g_t^\top \Sigma_{t+1} \g_t
    = 1- \frac{\det(\Sigma_t^{-1})}{\det(\Sigma_{t+1}^{-1})} 
  \leq \ln\frac{\det(\Sigma_{t+1}^{-1})}{\det(\Sigma_{t}^{-1})}, 
\]
which after summing over trials gives:
\begin{align*}
  \sum_{t=1}^T \eta \beta \g_t^\top \Sigma_{t+1} \g_t
  &\leq \ln\frac{\det(\Sigma_{T+1}^{-1})}{\det(\Sigma_{1}^{-1})}
  = \ln \det \big(\I + \eta \sigma^2 \beta \sum_{t=1}^T \g_t
  \g_t^\top\big)\\
  &= \sum_{i=1}^d \ln (1 + \lambda_i)
  \leq d \ln\bigg(1 + \frac{\eta\sigma^2 \beta G^2 T}{d}\bigg),
\end{align*}
where $\lambda_1,\ldots,\lambda_d$ are the eigenvalues of $ \eta\sigma^2
\beta \sum_{t=1}^T \g_t \g_t^\top$, and the
last inequality follows by maximizing under the constraint that
$\sum_i \lambda_i = \Tr( \eta\sigma^2
\beta \sum_{t=1}^T \g_t \g_t^\top) \leq \sigma^2 \eta \beta G^2 T$. As
discussed by
\citet[proof and
discussion of Theorem~11.7]{CesaBianchiLugosi2006}, the maximum is
achieved when $\lambda_i = \sigma^2 \eta \beta G^2 T / d$ for all $i$.

All together we find:
\[
    \regret_T(\u) \leq \frac{D^2}{2\eta\sigma^2} +
        \frac{d}{2\beta}\ln\left(1 + \frac{\eta \sigma^2\beta G^2 T}{d} \right),
\]
which was to be shown.

\end{proof}

\section{Proofs for Section~\ref{sec:adaptivity}}

\subsection{Proof of Theorem~\ref{thm:iProd}}
\label{sec:iProdProof}

Abbreviate $m_t(P) = -\ln
\E_{P}\left[e^{-\sloss_t(\eta,i)}\right]$ and define the
potential $\Phi_T = e^{-\sum_{t=1}^T m_t(P_t)}$. Then
$\Phi_T = \Phi_{T-1} = \cdots = \Phi_0 = 1$ since
\[
  \Phi_T - \Phi_{T-1}
    ~=~ e^{-\sum_{t=1}^{T-1} m_t(P_t)} \E_{P_T}\big[\eta r_T(i)\big] 
    ~=~ 0,
\]
where the last identity holds for any loss vector $\g_t$ by the
definition of $\w_T$. For any comparator $Q$ on $(\eta,i)$, it follows
that
\[
  0 = \sum_{t=1}^T m_t(P_t)
    = \sum_{t=1}^T \E_{Q}[\sloss_t(\eta,i)] + \sregret(Q)\\
    \leq \sum_{t=1}^T \E_{Q}[-\eta r_t(i) + \eta^2 r_t(i)^2] +
    \sregret(Q),
\]
where the last inequality is an application of the `prod-bound'
$-\ln(1+x) \leq -x + x^2$ with $x = \eta r_t(i)$, which holds for any
$x \geq -\half$ \citep[Lemma~1]{CesaBianchiMansourStoltz2007}. The
result \eqref{eqn:iProdReduction} is a direct consequence, and
\eqref{eqn:iProdSpecial} follows upon bounding $\E_Q[\eta] \geq
\etaopt/2$ and $\E_Q[\eta^2] \leq \etaopt^2$ and plugging in that
$\sregret(Q) \leq \kl(Q\|P_1) = \kl(\postpi\|\pi) - \ln
\gamma([\etaopt/2,\etaopt])$ for EW.

\subsection{Proof of Theorem~\ref{thm:Squint}}
\label{sec:SquintProof}

\begin{theorem}[Squint Reduction to EW]\label{thm:Squint}
  The exact same statement as in Theorem~\ref{thm:iProd} also holds when
  we replace the surrogate loss \eqref{eqn:iProdSurrogateLoss} by
  \eqref{eqn:SquintSurrogateLoss}.
\end{theorem}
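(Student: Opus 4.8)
The plan is to obtain the Squint analogue of the reduction~\eqref{eqn:iProdReduction} directly, exploiting that the quadratic surrogate~\eqref{eqn:SquintSurrogateLoss} dominates the logarithmic surrogate~\eqref{eqn:iProdSurrogateLoss} pointwise. First I would record that, since $\eta$ is restricted to $[0,\half]$ and $r_t(i) = f_t(\w_t) - f_t(\e_i) \in [-1,1]$ (because $f_t$ is linear on the simplex with $g_{t,i} \in [0,1]$ and $\w_t$ from~\eqref{eqn:wreduction} lies in the simplex), we have $\eta r_t(i) \geq -\half$, so the prod bound $-\ln(1+x) \leq -x + x^2$ for $x \geq -\half$, in its exponentiated form $e^{x-x^2} \leq 1 + x$, applies with $x = \eta r_t(i)$ and gives
\[
  e^{-\sloss_t(\eta,i)} = e^{\eta r_t(i) - \eta^2 r_t(i)^2} \leq 1 + \eta r_t(i)
\]
for the Squint loss $\sloss_t$; equivalently, the Squint surrogate is everywhere at least the iProd surrogate.

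The key point is then that the tilted mean~\eqref{eqn:wreduction} still forces $\E_{P_t}[\eta r_t(i)] = 0$ for every loss vector $\g_t$, an identity that was already established in the proof of Theorem~\ref{thm:iProd} and depends only on the definition of $\w_t$, not on the surrogate loss. Combined with the previous display this yields $\E_{P_t}\big[e^{-\sloss_t(\eta,i)}\big] \leq 1$, i.e.\ the mixability gap $m_t(P_t) := -\ln \E_{P_t}\big[e^{-\sloss_t(\eta,i)}\big]$ is nonnegative. This is the sole place where the argument diverges from iProd: there the potential $\Phi_T = e^{-\sum_t m_t(P_t)}$ is \emph{exactly} constant, whereas for the Squint loss it is merely nonincreasing, so $\sum_{t=1}^T m_t(P_t) \geq 0$ rather than $=0$ --- which is exactly the helpful direction, because moving the prod bound into the loss has already absorbed the slack.

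From here everything is bookkeeping identical to iProd. Unfolding the definition of the mix-regret,
\[
  \sregret(Q) = \sum_{t=1}^T m_t(P_t) - \E_Q\Big[\sum_{t=1}^T \sloss_t(\eta,i)\Big]
    \geq -\E_Q\Big[\sum_{t=1}^T \sloss_t(\eta,i)\Big]
    = \E_Q\Big[\sum_{t=1}^T\big(\eta r_t(i) - \eta^2 r_t(i)^2\big)\Big],
\]
then substituting $\sum_t r_t(i) = \sum_t f_t(\w_t) - \sum_t f_t(\e_i)$ and $\sum_t r_t(i)^2 = \V_T(i)$ and rearranging reproduces~\eqref{eqn:iProdReduction} with $\sregret$ now the Squint mix-regret. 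The specialization~\eqref{eqn:iProdSpecial} carries over verbatim: EW with learning rate~$1$ on the Squint surrogate still satisfies $\sregret(Q) \leq \kl(Q\|P_1)$ by the same consequence of Lemma~\ref{lem:EWRegret} invoked for iProd (subtract $\sum_t f_t(\w_t)$ from~\eqref{eqn:EWRegret} and rearrange), and for the stated comparator one bounds $\E_Q[\eta] \geq \etaopt/2$, $\E_Q[\eta^2] \leq \etaopt^2$, and uses $\kl(Q\|P_1) = \kl(\postpi\|\pi) - \ln\gamma([\etaopt/2,\etaopt])$. The only genuine obstacle is conceptual --- pinning down the sign of $m_t(P_t)$ --- after which the calculation is a transcription of the iProd proof.
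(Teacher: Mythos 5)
Your proposal is correct and follows essentially the same route as the paper: both arguments use the prod bound to show $\E_{P_t}\big[e^{-\sloss_t(\eta,i)}\big] \leq 1 + \E_{P_t}[\eta r_t(i)] = 1$, i.e.\ that the per-round mixability gap is nonnegative (the paper phrases this as the potential $\Phi_T$ being nonincreasing rather than constant), and then unwind the definition of the mix-regret exactly as for iProd. The remaining bookkeeping and the specialization to EW are identical in both.
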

Thus \eqref{eqn:iProdBigOh} also holds, and we recover the results of
\citep{KoolenVanErven2015} for Squint.

\begin{remark}
  The Metagrad algorithm \citep{VanErvenKoolen2016} is similar to Squint
  on a continuous set of experts indexed by $\w \in \reals^d$ with
  losses $f_t(\w) = \w^\top\g_t$, and the analysis of
  Theorem~\ref{thm:Squint} can be extended to handle this case.
\end{remark}

\begin{proof}
Let $m_t(P)$ and $\Phi_T$ be as in the proof of Theorem~\ref{thm:iProd},
but for the new surrogate loss \eqref{eqn:SquintSurrogateLoss}. Then
$\Phi_T \leq \Phi_{T-1} \leq \ldots \leq \Phi_0 = 1$, because
\[
  \Phi_T - \Phi_{T-1}
    = e^{-\sum_{t=1}^{T-1} m_t(P_t)}
    \left(\E_{P_T}\big[e^{-f_t(\eta,i)}\big]-1\right)
    \leq e^{-\sum_{t=1}^{T-1} m_t(P_t)} \E_{P_T}\big[\eta r_T(i)\big] 
    = 0,
\]
where the inequality follows from the `prod bound' (see the proof of
Theorem~\ref{thm:iProd}) and the final equality is again by definition
of~$\w_T$. For any $Q$, it follows that
\[
  0 \leq \sum_{t=1}^T m_t(P_t) = \sum_{t=1}^T \E_{Q}[\sloss_t(\eta,i)]
  + \sregret(Q) = \sum_{t=1}^T \E_{Q}[-\eta r_t(i) + \eta^2 r_t(i)^2]
  + \sregret(Q),
\]
which implies that \eqref{eqn:iProdReduction} also holds for Squint.
Since \eqref{eqn:iProdSpecial} is a corollary, it also follows directly.
\end{proof}

\subsection{Proof of Theorem~\ref{thm:coinbetting}}
\label{sec:coinbettingproof}

The proof of Theorem~\ref{thm:coinbetting} follows the same general
steps as the proofs for Theorems~\ref{thm:iProd} and~\ref{thm:Squint}.
However, bounding the mix-regret $\sregret_T^i(\eta)$ using a similar
analysis as for the Krichevsky-Trofimov estimator from
Example~\ref{ex:KT} would lead to an extra $\ln T$ factor in the
regret. This is avoided using a more delicate analysis that holds
specifically for the regret with respect to $\eta = \regret_T^+(i)/T$,
which requires a technical analytic inequality by
\citet[Lemma~16]{OrabonaPal2016}.

\begin{proof}
For $\sloss_t$ as in \eqref{eqn:iProdSurrogateLoss}, let $m_t = -\ln
\E_{i \sim P_t}\big[e^{-\sloss_t(\eta_t^i,i)}\big]$. Then, by the
same argument as in the proof of Theorem~\ref{thm:iProd}, $\Phi_T =
e^{-\sum_{t=1}^T m_t} = 1$. For any distribution $\postpi$ on $i$
and any $\etaopt^i \in [0,1]$, we therefore have
\begin{align}
  0 &= \sum_{t=1}^T m_t
    = \E_{\postpi}\left[\sum_{t=1}^T \sloss_t(\eta_t^i,i)\right] +
      \altsregret_T(\postpi)
    \leq \E_{\postpi}\left[\sum_{t=1}^T \sloss_t^i(\eta_t^i)\right] +
      \altsregret_T(\postpi)\notag\\
    &= \E_{\postpi}\left[\sum_{t=1}^T \sloss_t^i(\etaopt^i)
      + \sregret_T^i(\etaopt^i)\right] +
      \altsregret_T(\postpi).\label{eqn:coinbettingmixloss}
\end{align}
The minimizer of $\sum_{t=1}^T \sloss_t^i(\eta)$ over $\eta \in [0,1]$
is $\etaopt^i = \regret_T^+(i)/T$. Plugging this in, we find that
\begin{equation}\label{eqn:coinbettingcomparator}
  \sum_{t=1}^T \sloss_t^i(\etaopt^i)
    = -T \bernoullikl(\thalf + \tfrac{\regret_T^+(i)}{2T} \| \thalf).
\end{equation}
Substituting \eqref{eqn:coinbettingcomparator} in
\eqref{eqn:coinbettingmixloss} and reorganizing we obtain
\eqref{eqn:coinbettinggeneral}.

If we specialize to EW, then $\altsregret_T(\postpi) \leq
\kl(\postpi\|\pi)$ by the same argument as for iProd. In addition, to
bound $\sregret_T^i(\etaopt^i)$, let $\altbeta(x,y)$ be the distribution
on $\eta \in [-1,+1]$ such that $(1+\eta)/2$ has a $\beta(x,y)$
distribution. Then Lemma~\ref{lem:EWRegret} and the observation that the
mixability gap is at most $0$ because $\ell_t^i$ is $1$-exp-concave,
together imply that 
\begin{align*}
  \sregret_T^i(\etaopt^i)
    &\leq \min_{Q \in \domainP} \Big\{\underbrace{\E_{\eta \sim
    Q}\big[\sum_{t=1}^T \sloss_t^i(\eta)\big] + \kl(Q \|
    \altbeta(a,a))}_{A(Q,i)}\Big\} - \underbrace{\sum_{t=1}^T
    \sloss_t^i(\etaopt^i)}_{B(i)}.
\end{align*}
We first rewrite $B(i)$ using \eqref{eqn:coinbettingcomparator}. Then it
remains to bound the term with $A(Q,i)$ in expectation under $\postpi$.
To this end we may assume that $\regret_T(\postpi) :=
\E_{\postpi}[\regret_T(i)] \geq 0$ without loss of generality (otherwise
\eqref{eqn:coinbettingEW} holds trivially). Hence 
\begin{align*}
  \E_{i \sim \postpi}&\big[\min_{Q \in \domainP} A(Q,i)\big]
    \leq \min_{Q \in \domainP} \E_{i \sim \postpi}\big[A(Q,i)\big]\\
    &= \min_{Q \in \domainP} \left\{\E_{\eta \sim Q}\Big[
    -\frac{T+\regret_T(\postpi)}{2} \ln \frac{1+\eta}{2}
      -\frac{T-\regret_T(\postpi)}{2} \ln \frac{1-\eta}{2}
      -T\ln 2\Big] + \kl(Q \| \altbeta(a,a))\right\}\\
    &= -\ln \left(2^T \E_{X \sim \beta(a,a)}\left[X^{\frac{T+\regret_T(\postpi)}{2}}
    (1-X)^{\frac{T-\regret_T(\postpi)}{2}}\right]\right)\\
    &= -\ln \left(\frac{2^T \Gamma(2a)\Gamma\big(\frac{T+\regret_T(\postpi)}{2} +
    a\big)\Gamma\big(\frac{T-\regret_T(\postpi)}{2} + a\big)}{\Gamma(a)^2\Gamma(T+2a)}\right)\\
    &\leq \frac{-\regret_T(\postpi)^2}{2T+4a-2} + \half \ln
    \frac{T+2a-1}{2a}+\ln(e\sqrt{\pi}),
\end{align*}
where we have plugged in the minimizing $Q =
\altbeta(\frac{T+\regret_T(\postpi)}{2} +
a,\frac{T-\regret_T(\postpi)}{2} + a)$, which has nonnegative mean under
our assumption that $\regret_T(\postpi) \geq 0$, and where the last
inequality holds by \citep[Lemma~16]{OrabonaPal2016}, which applies for
$a \geq 1/2$, $\regret_T(\postpi) \in [-T,T]$ and $T \geq 1$.

With these regret bounds for EW, \eqref{eqn:coinbettinggeneral}
specializes to
\[
  \regret_T(\postpi)
    \leq \sqrt{(2T+4a-2)\left(\half \ln \frac{T+2a-1}{2a}+\ln(e\sqrt{\pi}) +
    \kl(\postpi\|\pi)\right)}.
\]
The result so far holds for any $a \geq \half$. Plugging in the choice
$a = \frac{T}{4} + \half$, suggested by \citet{OrabonaPal2016}, and
using $\half \ln \frac{3T}{T+2}+\ln(e\sqrt{\pi}) \leq 3$ completes
the proof.
\end{proof}

\section{Analysis of the Algorithm from Section~\ref{sec:bandits}}
\label{sec:appendix_bandits}
Let $\domainw \subset \reals^d$ be a compact convex set. Following
\citet{Bubeck12}, we assume without loss of generality that $\domainw$
is full rank, meaning that the linear combinations of $\domainw$ span
$\reals^d$ (otherwise we can express the elements of $\domainw$ in a
lower dimensional space). 

At trials
$t=1,2,\ldots,T$, the algorithm plays with a randomized choice $\w_t \in \domainw$, the
adversary chooses
an unobserved loss vector $\g_t$, which is not allowed to depend on the
realization of $\w_t$,
and the learner suffers and observes 
bounded loss $\inner{\w_t}{\g_t}$.
The goal is to minimize the expected regret:
$\E[\regret_T(\u)] = \E\big[\sum_{t=1}^T \inner{\w_t-\u}{\g_t} \big]$ for any
choice of the comparator $\u \in \domainw$.
We consider EW with a fixed learning rate $\eta$
and a prior distribution $P_1$ that is uniform over~$\domainw$.
At each trial $t$, after observing the loss $\inner{\w_t}{\g_t}$, the algorithm
constructs a random, unbiased estimate $\tilde{\g}_t$ of the loss vector $\g_t$
(described below), and uses this estimate to update the posterior. 
Since the projection step can be dropped (as $P_1$ is supported
on $\domainw$), the greedy and lazy versions of EW coincide and the posterior is given by 
$\der P_t(\w) \propto \exp(-\eta \sum_{s = 1}^{t-1} \inner{\w}{\tilde{\g}_s})\der \w$ for
all $\w \in \domainw$. Defining $\np_t = -\eta \sum_{s=1}^{t-1} \tilde{\g}_s$
(with $\np_1 = \0$), we can concisely write: 
\[
  \der P_{t+1}(\w) = e^{\inner{\w}{\np_t} - F(\np_t)}\der \w \quad \forall \w \in \domainw,
  \qquad \text{where~~}
  F(\np) = \ln \int_{\domainw} e^{\inner{\w}{\np}} \; \mathrm{d} \w
\]
is the cumulant generating function. 
At trial $t$,
the EW algorithm samples $\w_t \sim Q_t$, where $Q_t = (1-\gamma) P_t + \gamma R$
for $\gamma \in (0,1)$ is a mixture of the posterior 
$P_t$ and a fixed ``exploration'' distribution~$R$.
The exploration distribution is chosen to be \emph{John's exploration}, defined
as follows \citep{Bubeck12}.
Let $\mathcal{K}$ be the
ellipsoid of minimal volume \emph{enclosing} $\domainw$:
\begin{equation}
  \mathcal{K} = \{\w \in \reals^d \colon (\w-\w_0)^\top \boldsymbol{H}^{-1} (\w-\w_0)
  \leq 1\}
  \label{eq:ellipsoid_E_of_minimal_volume}
\end{equation}
for some positive definite matrix $\boldsymbol{H}$ and $\w_0 \in \reals^d$. In what follows we assume without loss
of generality that $\domainw$ is
centered in the sense that $\w_0=\0$ (otherwise all $\w \in \domainw$ need to be 
shifted by $\w_0$). \citet{Bubeck12} show
that one can choose $M \leq d(d+1)/2 + 1$ contact points $\u_1,\ldots,\u_M \in 
\mathcal{K} \cap 
\domainw$, and a distribution
$R$ over these points that satisfies:
\begin{equation}
  \E_{\w \sim R} [\w\w^\top] = \frac{1}{d}
  \boldsymbol{H}.
\label{eq:property_of_R}
\end{equation}
The estimate $\tilde{\g}_t$ is constructed based on the
observed loss $\inner{\w_t}{\x_t}$, by: 
\[
  \tilde{\g}_t = \inner{\w_t}{\g_t} \left(\E_{Q_t}[\w\w^\top]\right)^{-1} \w_t.
\]
We now show the following regret bound for the resulting algorithm:
\begin{theorem}
Assume the losses are bounded: $|\inner{\w}{\g_t}| \leq 1$ for all $\w \in \domainw$
and all $t$. Let $\eta = \sqrt{\frac{\nu \ln T}{3 d T}}$,
where $\nu = O(d)$ is the self-concordant barrier parameter of $F^*$,
and let $\gamma = \eta d$. Then the expected regret for the EW algorithm
described above is bounded by
\[
  \E[\regret_T(\u)] \leq 2 \sqrt{3\nu d T \ln T} +2
= O(d \sqrt{T \ln T}).
\]
\end{theorem}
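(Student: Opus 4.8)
The plan is to run EW on the linearized surrogate losses $\sloss_t(\w) = \inner{\w}{\tilde{\g}_t}$, apply the EW regret bound of Lemma~\ref{lem:EWRegret}, and translate the resulting surrogate regret back into the original bandit regret using three ingredients: the conditional unbiasedness of $\tilde{\g}_t$, the self-concordance of $F^*$ granted by the statement, and a dedicated bound on the mixability gap that exploits John's exploration. First I would decompose the regret: since $\g_t$ is fixed before the draw $\w_t \sim Q_t$ and $\E_{Q_t}[\w] = (1-\gamma)\bar{\w}_t + \gamma\,\E_R[\w]$ with $\bar{\w}_t := \E_{P_t}[\w]$,
\[
  \E[\regret_T(\u)]
    = (1-\gamma)\,\E\Big[\textstyle\sum_{t=1}^T \inner{\bar{\w}_t - \u}{\g_t}\Big]
      + \gamma\,\E\Big[\textstyle\sum_{t=1}^T \inner{\E_R[\w] - \u}{\g_t}\Big],
\]
and the exploration term is at most $2\gamma T$ because all losses lie in $[-1,+1]$. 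To control the main term I would pass to the shrunk comparator $\u^* = (1-\tfrac1T)\u + \tfrac1T\,\w_1 \in \domainw$, for which $\sum_t \inner{\u^* - \u}{\g_t} \le 2$, and use that $\tilde{\g}_t = \inner{\w_t}{\g_t}\,A_t^{-1}\w_t$ is conditionally unbiased --- which needs $A_t := \E_{Q_t}[\w\w^\top] \succeq \tfrac{\gamma}{d}\,\boldsymbol{H} \succ 0$ invertible, guaranteed by property~\eqref{eq:property_of_R} of John's exploration --- so that, by the tower rule, $\E[\sum_t \inner{\bar{\w}_t - \u^*}{\g_t}] = \E[\sum_t \inner{\bar{\w}_t - \u^*}{\tilde{\g}_t}]$.

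Next I would bound the surrogate regret. Each $P_t$ is a member of the exponential family with cumulant generating function $F$, so I would apply lazy EW (Lemma~\ref{lem:EWRegret}) with comparator distribution $Q$ the member of this family whose mean is $\u^*$; as the surrogate losses are linear this gives
\[
  \textstyle\sum_{t=1}^T \inner{\bar{\w}_t - \u^*}{\tilde{\g}_t}
    \;\le\; \tfrac1\eta\,\kl(Q\|P_1) + \sum_{t=1}^T \mathrm{gap}_t,
  \qquad
  \mathrm{gap}_t := \inner{\bar{\w}_t}{\tilde{\g}_t}
    + \tfrac1\eta \ln \E_{P_t}\big[e^{-\eta\inner{\w}{\tilde{\g}_t}}\big].
\]
By Lemma~\ref{th:KLisB}, $\kl(Q\|P_1) = B_{F^*}(\u^*\|\w_1)$; since the uniform prior has natural parameter $\0$ we have $\nabla F^*(\w_1) = \0$, so $\w_1$ minimizes $F^*$ and $B_{F^*}(\u^*\|\w_1) = F^*(\u^*) - F^*(\w_1)$, which the standard bound for a $\nu$-self-concordant barrier, evaluated at the point $\u^*$ lying $\tfrac1T$-interior on the ray from $\w_1$ through $\u$, controls by $\nu\ln T$.

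The crux will be the mixability gap, which is delicate because the importance weights $\tilde{\g}_t$ can be large. The key observation is that John's exploration, combined with the tuning $\gamma = \eta d$, keeps $\eta\inner{\w}{\tilde{\g}_t}$ uniformly bounded on $\domainw$: from $A_t \succeq \tfrac{\gamma}{d}\boldsymbol{H}$ and $\domainw \subseteq \mathcal{K}$ (hence $\w^\top\boldsymbol{H}^{-1}\w \le 1$), a Cauchy--Schwarz step in the $A_t^{-1}$-norm yields $|\inner{\w}{\tilde{\g}_t}| \le d/\gamma = 1/\eta$, so $|\eta\inner{\w}{\tilde{\g}_t}| \le 1$. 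The elementary inequality $e^z \le 1 + z + z^2$ for $z \le 1$ then gives $\E_{P_t}[e^{-\eta\inner{\w}{\tilde{\g}_t}}] \le 1 - \eta\inner{\bar{\w}_t}{\tilde{\g}_t} + \eta^2\,\E_{P_t}[\inner{\w}{\tilde{\g}_t}^2]$, and $\ln(1+z) \le z$ leaves
\[
  \mathrm{gap}_t
    \;\le\; \eta\,\E_{P_t}\big[\inner{\w}{\tilde{\g}_t}^2\big]
    \;=\; \eta\,\tilde{\g}_t^\top \E_{P_t}[\w\w^\top]\,\tilde{\g}_t
    \;\le\; \tfrac{\eta}{1-\gamma}\,\tilde{\g}_t^\top A_t\,\tilde{\g}_t
    \;\le\; \tfrac{\eta}{1-\gamma}\,\w_t^\top A_t^{-1}\w_t,
\]
using $\E_{P_t}[\w\w^\top] \preceq \tfrac1{1-\gamma}A_t$ and $\tilde{\g}_t^\top A_t \tilde{\g}_t = \inner{\w_t}{\g_t}^2\,\w_t^\top A_t^{-1}\w_t \le \w_t^\top A_t^{-1}\w_t$. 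Taking the expectation over $\w_t \sim Q_t$ and using $\E_{Q_t}[\w_t^\top A_t^{-1}\w_t] = \Tr(A_t^{-1}A_t) = d$ yields $\E[\mathrm{gap}_t] \le \tfrac{\eta d}{1-\gamma}$.

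Putting the pieces together, the $(1-\gamma)$ prefactor of the main term absorbs the $\tfrac1{1-\gamma}$ from the summed gap, leaving
\[
  \E[\regret_T(\u)] \;\le\; \tfrac{\nu\ln T}{\eta} + \eta dT + 2\gamma T + 2;
\]
substituting $\eta = \sqrt{\nu\ln T/(3dT)}$ and $\gamma = \eta d$ turns the three main terms into $\sqrt{3\nu dT\ln T}$, $\tfrac1{\sqrt3}\sqrt{\nu dT\ln T}$ and $\tfrac2{\sqrt3}\sqrt{\nu dT\ln T}$, whose sum is exactly $2\sqrt{3\nu dT\ln T}$. The main obstacle is the mixability-gap estimate: it is where John's exploration and the choice $\gamma = \eta d$ are essential --- both to make $\eta\inner{\w}{\tilde{\g}_t}$ small enough for the prod-type inequality and to keep the $A_t^{-1}$-trace equal to $d$ --- while the remaining steps are a routine assembly of Lemma~\ref{lem:EWRegret}, the unbiasedness reduction, and the self-concordance of $F^*$.
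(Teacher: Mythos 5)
Your proposal is correct and follows essentially the same route as the paper's own proof in Appendix~\ref{sec:appendix_bandits}: the same exploration/exploitation decomposition with cost $2\gamma T$, the same unbiasedness reduction, the same use of Lemma~\ref{lem:EWRegret} with the exponential-family comparator and the identity $\kl(Q\|P_1)=F^*(\u^*)-F^*(\Mu_1)\le\nu\ln T$, the same prod-type mixability-gap bound enabled by $|\eta\inner{\w}{\tilde{\g}_t}|\le 1$ from John's exploration, and the same trace computation giving $\E[\mathrm{gap}_t]\le \eta d/(1-\gamma)$. The only cosmetic difference is that you shrink the comparator to $\u^*$ up front (always paying the $+2$) rather than case-splitting on $\pi_{\Mu_1}(\u)$ as the paper does, which changes nothing in the final bound.
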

\begin{proof}
  We first verify that the estimate $\tilde{\g}_t$ of $\g_t$ is unbiased:
\[
\E_{\w_t \sim Q_t}[\tilde{\g}_t] 
= \E_{\w_t \sim Q_t}\left[\left(\E_{\w \sim Q_t}[\w\w^\top]\right)^{-1} \w_t \inner{\w_t}{\g_t}  \right]
= \left(\E_{\w \sim Q_t}[\w\w^\top]\right)^{-1} \E_{\w_t \sim Q_t}\left[\w_t \w_t^\top \right] \g_t
= \g_t.
\]
Furthermore, due to the inclusion of the exploration
distribution $R$, we have:
\[
  \E_{\w \sim Q_t}[\w\w^\top]
  = (1-\gamma) \E_{\w \sim P_t}[\w\w^\top] + \gamma \E_{\w \sim R}[\w\w^\top]
  \succeq \frac{\gamma}{d} \boldsymbol{H},
\]
(where $\boldsymbol{A} \succeq \boldsymbol{B}$ means $\boldsymbol{A} - \boldsymbol{B}$
is positive semidefinite), and hence for any $\u \in \domainw$:
\begin{equation}
\left \langle \u, \Big(\E_{\w \sim Q_t}[\w\w^\top]\Big)^{-1} \u \right \rangle
\leq \left \langle \u, \frac{d}{\gamma} \boldsymbol{H}^{-1} \u \right \rangle
\leq \frac{d}{\gamma}, 
  \label{eq:quadratic_form_bound}
\end{equation}
where the last inequality is from the fact that $\domainw \subseteq \mathcal{K}$
and from the definition of $\mathcal{K}$ in \eqref{eq:ellipsoid_E_of_minimal_volume}.
This, however, implies that the linear losses induced by $\tilde{\g}_t$
are bounded for any $\u \in \domainw$:
\begin{align}
  \inner{\u}{\tilde{\g}_t} 
  &= \inner{\w_t}{\g_t} \left \langle \u,
    \Big(\E_{\w \sim Q_t}[\w\w^\top]\Big)^{-1} \w_t \right \rangle \nonumber \\
  &\leq |\inner{\w_t}{\g_t}| \left \langle \w_t,
  \Big(\E_{\w \sim Q_t}[\w\w^\top]\Big)^{-1} \w_t \right \rangle^{1/2}
    \left \langle \u,
  \Big(\E_{\w \sim Q_t}[\w\w^\top]\Big)^{-1} \u \right \rangle^{1/2} \leq \frac{d}{\gamma},
  \label{eq:bound_on_the_loss_estimate}
\end{align}
where the first inequality is from the Cauchy-Schwarz inequality
(for positive semidefinite $\boldsymbol{A}$, 
$\boldsymbol{x}^\top \boldsymbol{A} \boldsymbol{y}
\leq (\boldsymbol{x}^\top \boldsymbol{A} \boldsymbol{x})^{1/2}
(\boldsymbol{y}^\top \boldsymbol{A} \boldsymbol{y})^{1/2}$),
while the second inequality is due to 
assumption $|\inner{\w}{\g_t}| \leq 1$ and 
due to \eqref{eq:quadratic_form_bound} applied twice (first to $\u$ and
then to $\w_t$).

Let $\Mu_t$ be the mean value of $P_t$:
$\Mu_t = \E_{P_t}[\w]$. As a general property of exponential families or as a consequence of Theorem~\ref{thm:MDasEW}, we have
$\Mu_t = \nabla F(\np_t)$,
and $\Mu_t$ and $\np_t$ are conjugate parameters of the exponential family.
Let us fix a comparator $\u \in \domainw$ and define $P_\u$ to be the
member of the exponential
family with cumulant generating function $F$ that has mean value $\u$:
$\E_{\w \sim P_\u}[\w]=\u$. 
We now apply Lemma~\ref{lem:EWRegret} for the EW algorithm on the
sequence of linear losses induced by
$\tilde{\g}_1,\ldots,\tilde{\g}_T$ to get:
\begin{align*}
\sum_{t=1}^T \inner{\Mu_t - \u}{\tilde{\g}_t}
&= \sum_{t=1}^T \inner{\Mu_t}{\tilde{\g}_t}
- \sum_{t=1}^T \E_{\w \sim P_\u}[\inner{\w}{\tilde{\g}_t}] \\
&\leq \frac{1}{\eta} \kl(P_\u \| P_1)
+ \sum_{t=1}^T \inner{\Mu_t}{\tilde{\g}_t} + 
\frac{1}{\eta} \ln \E_{\w \sim P_t} \left[e^{-\eta \inner{\w}{\tilde{\g}_t}} \right]
\end{align*}
(note that in this section we use $\Mu_t$ to denote the mean of $P_t$, while
$\w_t$ is reserved for the randomized action at trial $t$ sampled from $Q_t$).
Since $P_\u$ and $P_1$ are members of the same exponential family, 
the KL-term can be re-expressed using Lemma~\ref{th:KLisB}:
\[
  \kl(P_\u \| P_1) = D_{F^*}(\u \| \Mu_1)
  = F^*(\u) - F^*(\Mu_1) - \underbrace{\nabla F^*(\Mu_1)^\top}_{\0}
  (\Mu - \Mu_1)
  = F^*(\u) - F^*(\Mu_1),
\]
where we used the fact that $\Mu_1$ has conjugate parameter $\np_1 = \0$, and
thus $\nabla F^*(\Mu_1) = \np_1 = \0$.
To bound the mixability gap,
we will now use that by assumption $\eta = \frac{\gamma}{d}$, so that by
\eqref{eq:bound_on_the_loss_estimate} we have
$|\eta \inner{\w}{\tilde{\g}_t}| \leq 1$ for any $\w \in \domainw$. 
Using the fact that $e^{-s} \leq 1 - s + s^2$
holds for $s \geq -1$, and 
combining with $\ln(1+x) \leq x$ gives:
\begin{align*}
\inner{\Mu_t}{\tilde{\g}_t} + 
\frac{1}{\eta} \ln \E_{\w \sim P_t} \left[e^{-\eta \inner{\w}{\tilde{\g}_t}} \right]
  &\leq \inner{\Mu_t}{\tilde{\g}_t} + 
\frac{1}{\eta} \ln \left(1 + \E_{\w \sim P_t} \left[-\eta \inner{\w}{\tilde{\g}_t} +
  \eta^2 \inner{\w}{\tilde{\g}_t}^2 \right] \right) \\
  &\leq \underbrace{\inner{\Mu_t}{\tilde{\g}_t} - \E_{\w \sim P_t}[\inner{\w}{\tilde{\g}_t}]}_{=0}
  + \eta \E_{\w \sim P_t} \left[\inner{\w}{\tilde{\g}_t}^2 \right] \\
  &= \eta \tilde{\g}_t^\top \E_{\w \sim P_t} \left[\w \w^\top \right] \tilde{\g}_t.
\end{align*}
Combining the bounds on the KL-term and the mixability gap gives:
\begin{equation}
\sum_{t=1}^T \inner{\Mu_t - \u}{\tilde{\g}_t}
\leq \frac{F^*(\u) - F^*(\Mu_1)}{\eta} + \eta \sum_{t=1}^T 
 \tilde{\g}_t^\top \E_{\w \sim P_t} \left[\w \w^\top \right] \tilde{\g}_t.
\label{eq:EW_bound_on_mu_t}
\end{equation}
We can use this result to bound the regret of the original algorithm in the
following way. 
First, note that:
\begin{align*}
  \E_{\w_t \sim Q_t} \left[ \inner{\w_t-\u}{\g_t} \right] 
&= \gamma \langle \E_{\w_t \sim R}[\w_t]-\u, \g_t\rangle
+ (1-\gamma) \langle \E_{\w_t \sim P_t}[\w_t] - \u, \g_t \big\rangle \\
&\leq 2\gamma + (1-\gamma) \langle \Mu_t - \u, \g_t \big\rangle 
=  2\gamma + (1-\gamma) \E_{\w_t \sim Q_t} \left[ \inner{\Mu_t-\u}{\tilde{\g}_t} \right],
\end{align*}
where the random quantity in the last expectation is $\tilde{\g}_t$,
because it depends on $\w_t$.
Therefore:
\begin{align}
\sum_{t=1}^T \E_{\w_t \sim Q_t} [\inner{\w_t - \u}{\g_t}]
&\leq 2 \gamma T + (1-\gamma) \sum_{t=1}^T \E_{\w_t \sim Q_t} 
\left[ \inner{\Mu_t-\u}{\tilde{\g}_t} \right] \nonumber \\
&\leq 2\gamma T + \frac{F^*(\u) - F^*(\Mu_1)}{\eta} +  \eta (1-\gamma)
\sum_{t=1}^T \E_{\w_t \sim Q_t} \left[\tilde{\g}_t^\top \E_{\w \sim P_t} \left[\w
\w^\top \right] \tilde{\g}_t \right] \nonumber \\
  &\leq 2\gamma T +  \frac{F^*(\u) - F^*(\Mu_1)}{\eta} + \eta
\sum_{t=1}^T \E_{\w_t \sim Q_t} \left[\tilde{\g}_t^\top \E_{\w \sim Q_t} \left[\w
\w^\top \right] \tilde{\g}_t \right],
\label{eq:intermediate_bound_on_quadratic_form}
\end{align}
where the second inequality is from \eqref{eq:EW_bound_on_mu_t},
while the last inequality is due to:
\[
  \E_{\w \sim Q_t}[\w\w^\top]
  = (1-\gamma) \E_{\w \sim P_t}[\w\w^\top] + \gamma \E_{\w \sim R}[\w\w^\top]
  \succeq (1-\gamma) \E_{\w \sim P_t}[\w\w^\top].
\]
Using the definition of $\tilde{\g}_t$ and $\inner{\w_t}{\g_t}^2 \leq
1$, we further bound:
\begin{align*}
  \E_{\w_t \sim Q_t} \left[ \tilde{\g}_t^\top \E_{\w \sim Q_t} \left[\w
\w^\top \right] \tilde{\g}_t \right] 
&\leq  \E_{\w_t \sim Q_t} \left[\w_t^\top \left(\E_{\w \sim Q_t}[\w\w^\top]\right)^{-1} \E_{\w \sim Q_t} \left[\w \w^\top \right] \left(\E_{\w \sim Q_t}[\w\w^\top]\right)^{-1} \w_t \right] \\
&= \sum_{t=1}^T \E_{\w_t \sim Q_t} \left[\Tr\left(
  \left(\E_{\w \sim Q_t}[\w\w^\top]\right)^{-1} \w_t \w_t^\top \right) \right] \\
&= \sum_{t=1}^T \Tr\left( \I \right) = Td.
\end{align*}
Plugging the above into \eqref{eq:intermediate_bound_on_quadratic_form} and taking
expectation with respect to the randomness of the algorithm results in
the following bound on
the expected regret:
\[
\E [\regret_T(\u)] 
= \E \left[\sum_{t=1}^T \E_{\w_t \sim Q_t} [\inner{\w_t - \u}{\g_t}]
 \right] 
 \leq 2\gamma T +  \frac{F^*(\u) - F^*(\Mu_1)}{\eta} + \eta Td.
\]
What is left to bound is $F^*(\u) - F^*(\Mu_1)$. 
To this end, define
the Minkowski function \citep{Scrible}
on $\domainw$ as:
\[
\pi_{\Mu}(\w) = \inf\{t \geq 0 \colon \Mu + t^{-1}(\w - \Mu) \in \domainw\}.
\]
\citet{bubeck2014} show
that $F^*$
is a $\nu$-self concordant barrier on $\mathcal{\domainw}$ with $\nu = O(d)$. 
Using this property and Theorem 2.2 from \cite{Scrible} we get:
\[
  F^*(\u) - F^*(\Mu_1) \leq \nu \ln \left(\frac{1}{1-\pi_{\Mu_1}(\u)}\right).
\]
If $\u$ is such that $\pi_{\Mu_1}(\u) \leq 1 - \frac{1}{T}$, then
$F^*(\u) - F^*(\Mu_1) \leq \nu \ln T$.
On the other hand, if $\pi_{\Mu_1}(\u) \leq 1 - \frac{1}{T}$, we define
a new comparator $\u' = (1-\frac{1}{T}) \u + \frac{1}{T}\Mu_1$, for which
$\pi_{\Mu_1}(\u') \leq 1 - \frac{1}{T}$ \citep{Scrible}, and use the regret bound above
for $\u'$ to get:
\begin{align*}
  \E [\regret_T(\u)] &=\E [\regret_T(\u')] + \sum_{t=1}^T \inner{\u' - \u}{\g_t} 
                     =\E [\regret_T(\u')] + \frac{1}{T} \sum_{t=1}^T \inner{\Mu_1-\u}{\g_t}  \\
                     &\leq 2\gamma T +  \frac{F^*(\u') - F^*(\Mu_1)}{\eta} + \eta T d + 2
                     \leq  2\gamma T + \frac{\nu \ln T}{\eta} +  \eta T
                     d + 2.
\end{align*}
Recalling that $\gamma = \eta d$ and tuning $\eta = \sqrt{\frac{\nu \ln T}{3 d T}}$
gives the claimed bound.
\end{proof}
\end{document}